\title{Graph matching between bipartite and unipartite networks: to collapse, or not to collapse, that is the question.}
\author[$1,2$]{Jes\'us Arroyo}
\author[$2$]{Carey E. Priebe}
\author[$1$]{Vince~Lyzinski}
\affil[$1$]{\small Department of Mathematics, University of Maryland}
\affil[$2$]{\small Department of Applied Mathematics and Statistics, Johns Hopkins University}
\begin{document}

\maketitle

\begin{abstract}
    Graph matching consists of aligning the vertices of two unlabeled graphs in order to maximize the shared structure across networks; when the graphs are unipartite, this is commonly formulated as minimizing their edge disagreements. 
    In this paper we address the common setting in which one of the graphs to match is a bipartite network and one is unipartite. 
    Commonly, the bipartite networks are collapsed or projected into a unipartite graph, and graph matching proceeds as in the classical setting. This potentially leads to noisy edge estimates and loss of information. 
    We formulate the graph matching problem between a bipartite and a unipartite graph using an undirected graphical model, and introduce methods to find the alignment with this model without collapsing. We theoretically demonstrate that our methodology is consistent, and provide non-asymptotic conditions that ensure exact recovery of the matching solution.
    In simulations and real data examples, we show how our methods can result in a more accurate matching than the naive approach of transforming the bipartite networks into unipartite, and we demonstrate the performance gains achieved by our method in simulated and real data networks, including a co-authorship-citation network pair, and brain structural and functional data.
\end{abstract}

\section{Introduction}

The problem of inferring the correspondence between the vertices of two or more graphs, commonly referred as graph matching, has received recent interest in the literature, motivated by multiple applications in social and biological network analysis, image and document processing, pattern recognition, among others \citep{ConteReview,foggia2014graph,yan2016short}. 
Myriad approximate matching methods and random graph models have been developed for studying this problem, mostly focusing on the setting where the practitioner seeks to align a pair of unipartite graphs that represent two sets relationships between a common set of entities \citep{pedarsani2011privacy,narayanan2009anonymizing,JMLR:v15:lyzinski14a,rel,korula2014efficient,8039503,feizi2019spectral,Fan2019}. 
In many scenarios, the available network structures are more complex and can consists of different classes of vertices and/or edges that cannot be well represented using unipartite graphs.
In these settings, often the existing approaches are inapplicable directly, and significant data processing is needed to rend the problem amenable to existing analysis. 

The goal of this paper is to study the graph matching problem where the pair of graphs consist of one unipartite and one bipartite network. 
Bipartite networks are often used to encode relationships between entities in two different classes, for example, transactions of customers  with businesses \citep{bennett2007netflix}, authorship of publications by scientist \citep{ji2016coauthorship}, occurrence of words within a set of documents \citep{dhillon2001co}, protein-gene interactions \citep{pavlopoulos2018bipartite}, among many other examples. 
A convenient approach when dealing with bipartite data is to collapse into a unipartite network over one class of entities by defining some edges using a measure of connectivity, for example  one-mode projections \citep{Zhou2007,arora2012learning}, correlation \citep{macmahon2013community}, inverse covariance \citep{narayan2015sample,lo2017inferring}, or learning dependency graph structures \citep{she2019indirect}. Although by doing this it is possible to proceed with inference using methods for unipartite networks, this approach has multiple disadvantages. 
Having to estimate these edges requires the definition of a proper measure, which is difficult to assess in practice and might be different depending of the inference goals. 
Moreover, this  process results in noisy estimates that can induce  errors in subsequent inference.
Additionally, this estimation process often requires algorithmic and parameter decisions, and the results of subsequent analysis can be sensitive to those choices. 
Developing inference methods for bipartite network data directly is an important problem to address, and the analysis of bipartite network data has attracted significant interest, particularly in problems such as community detection \citep{Larremore2014,Zhou2019,Razaee2019},
co-clustering  \citep{dhillon2001co,Choi2017} and topic modeling \citep{blei2003latent}, for which methods specifically tailored for this type of data have been developed.

In particular, here we consider the setting when one of the graphs is a bipartite network between two disjoint sets of vertices $U$ and $V$, and the other graph is a unipartite graph with edges between elements in $U$. An example of this setting is given in Figure~\ref{fig:statsnet} with data from \citep{ji2016coauthorship}, in which $U$ represents a group of statisticians and $V$ is a set of papers written by them. 
An edge in the unipartite graph connects two statisticians if they are coauthors in one paper, and the edges of the bipartite graph link the authors with all the papers that they have cited. 
Typically, joint graph inference across such a network pair usually requires a priori knowledge of the correspondence between the vertices of the graphs.
In the more traditional setting when both graphs are unipartite (or both bipartite \citep{Koutra2013}), if the correspondence is unknown then graph matching is commonly used to recover an estimate of the latent correspondence before proceeding with graph inference \citep{chen2016joint}.

The problem of graph matching between bipartite and unipartite networks presents a more challenging setting than the classical graph matching problem, and although bipartite graphs are ubiquitous, to the best of our knowledge this  setting has not been studied before. Formulating the problem is not straightforward because the edges do not have a direct correspondence between the graphs, and in practice, bipartite graphs are typically collapsed into a unipartite graph before performing graph matching to solve this problem.
To address this problem, we introduce a new formulation of graph matching based on a novel joint model for the pair of graphs. We model the edges of the bipartite network using an undirected graphical model based on the information contained in a permuted version of the unipartite graph. 
The unshuffling permutation and the parameters of the model are then jointly estimated, yielding (demonstrably) superior performance over methods that perform this estimation in different stages. 
Moreover, using simulations and real data we show that our approach yields superior performance over classical graph matching strategies that first collapse the bipartite network onto one of its parts. 


\begin{figure}
    \centering
    \includegraphics[width=0.4\textwidth]{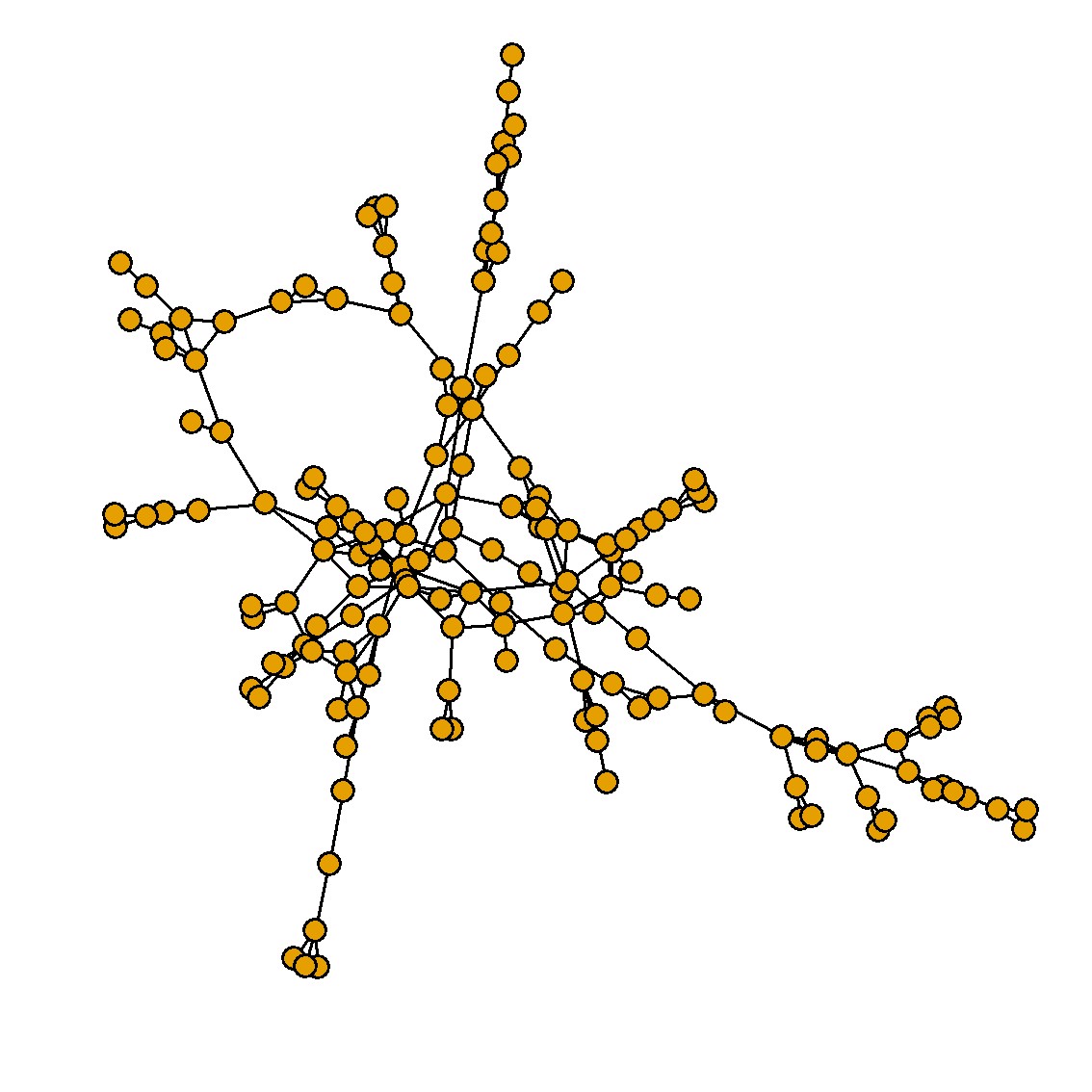}
    \includegraphics[width=0.4\textwidth]{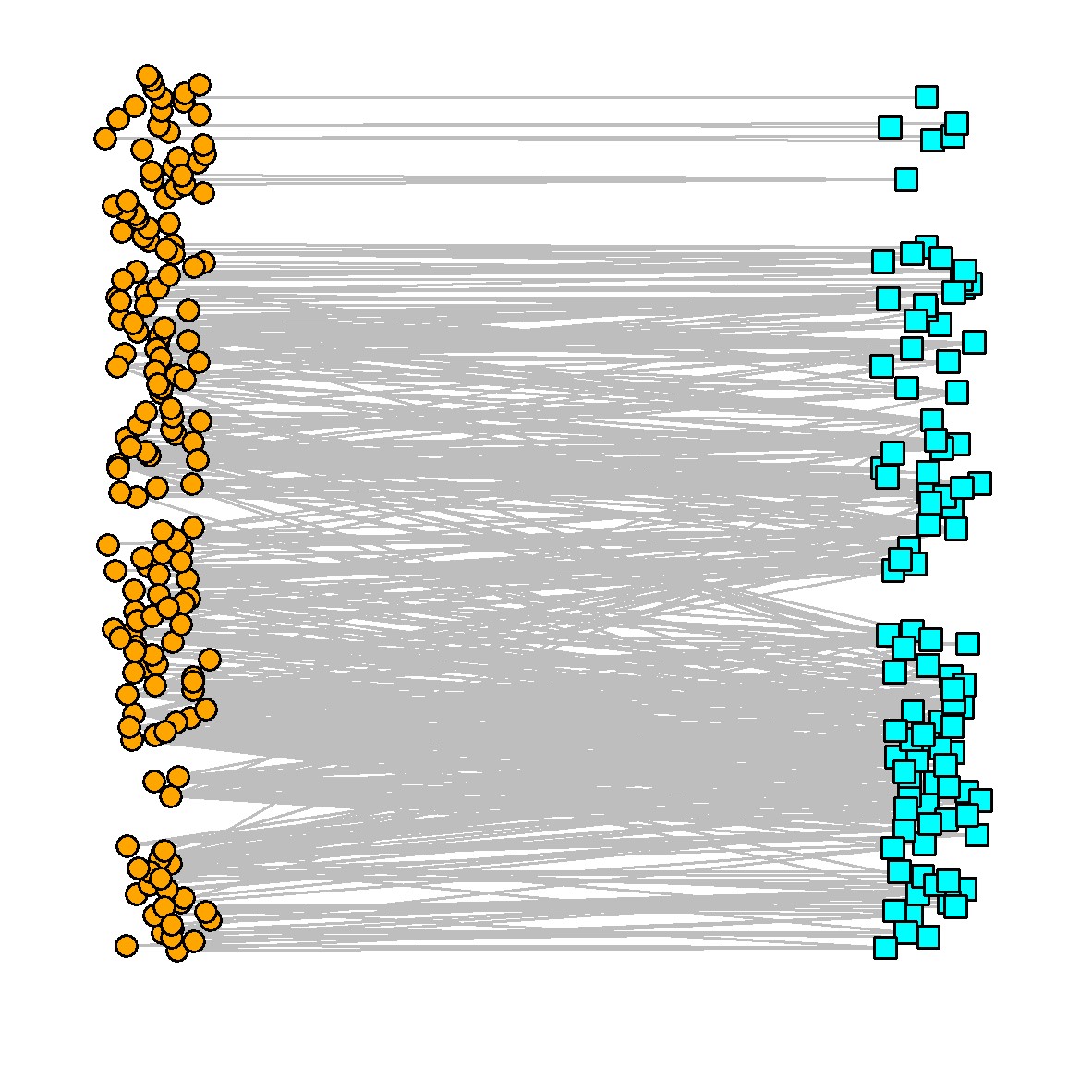}
    \caption{Co-authorship graph between authors (top) and bipartite graph of papers cited by each author (bottom) from top statistics journals \citep{ji2016coauthorship}. Authors are represented as orange circles in both graphs, and research papers correspond to teal squares.}
    \label{fig:statsnet}
\end{figure}

\section{Graph matching problem formulation}
Given a pair of simple graphs $G_1=(U_1, E_1)$ and $G_2=(U_2, E_2)$ with the same number of vertices $n:=|U_1|=|U_2|$ and edge sets $E_i\subset U_i\times U_i, i = 1,2$, the classical graph matching problem consists in finding a bijection between the vertices of the two graphs to maximize a measure of their induced similarity. 
If the graphs are isomorphic, then there exists an exact matching that makes the aligned edges identical, but usually, in practice, there is no such isomorphism (and finding it if it exists is notoriously difficult in practice \citep{babai2015graph}). 
The inexact graph matching aims to minimize the number of edge disagreements between the aligned graphs, and can be formally defined using the adjacency matrices $A_1,A_2\in\{0,1\}^{n\times n}$ of $G_1$ and $G_2$ respectively, as solving a quadratic assignment problem to find the permutation $\widehat{P}$ such that
\begin{equation}
    \widehat{P}\in\argmin_{P\in\Pi_n}\|A_1 - PA_2P^\top \|_F^2,
    \label{eq:GMP}
\end{equation} 
where $\Pi_n:=\{P\in\{0,1\}^{n\times n}: PP^\top =I_n\}$ is the set of $n\times n$ permutation matrices, with $I_n$ the $n\times n$ identity matrix, and $\|\cdot\|_F$ is the Frobenius norm. 
Typically, it is assumed that the vertex sets $U_1$ and $U_2$ represent the same set of entities $U$, and the graphs $G_1$ and $G_2$ represent their relationships in two different views, so the edges of the aligned graphs would be similar/correlated.
The formulation in Equation~\eqref{eq:GMP} aims to capture this similarity, and has been justified in different random graph models \citep{pedarsani2011privacy,rel,lyzinski2016information,cullina2016improved,cullina2017exact,Arroyo2018,Fan2019}.

In this paper, we consider a different setting, in which instead of observing two  views of the interactions between the vertices in $U$, in one of the views we observe the interaction between $U$ and a different set of vertices $V$, with $|V|=m$, which are encoded in a bipartite graph. Denote by $G=(U_1,E)$ and $H=(U_2,V,F)$ the unipartite and bipartite graphs that are observed, where $U_1$ and $U_2$ are two different orderings of $U$, and $E\subset \binom{U_1}{2}$ and $F\subset U_2\times V$ their corresponding edges. 
 
Define $A\in\{0,1\}^{n\times n}$ and $B\in\real^{n\times m}$ as the adjacency matrices for $G$ and $H$ respectively, so $A_{ij}=1$ indicates that $\{i,j\}\in E$, for $u_i,u_j\in U_1$, and $B_{ik}=1$ indicates an edge between $u_i\in U_2$ and $v_j\in V$ (or in general, $B_{ij}\in\real$ represents the weight of the relationship between these vertices). 
The goal of this paper is to find a  bijection between the vertices $U_1$ and $U_2$ that uncovers the latent correspondence between these vertex sets.

\subsection{The bipartite--to--unipartite graph matching problem}

Formulating a graph matching problem between a unipartite and a bipartite graph in the setting just described is not immediate, as this problem is not immediately compatible to the traditional graph matching formulation in Equation~\eqref{eq:GMP}.
Indeed, in the bipartite--to--unipartite setting it is not possible to define a correspondence between the edges of both graphs.  
Similarly, this setting is also different from other formulations of graph matching across two bipartite graphs \citep{Koutra2013}. 

A popular approach when dealing with bipartite graphs is to collapse the graph into a (possibly weighted) unipartite graph with a new adjacency matrix $\widetilde{B}\in\real^{n\times n}$ that uses an appropriate measure to define the edges. 
This is commonly done via \emph{one-mode projections} \citep{Zhou2007,Zweig2011}, such as the co-occurrence matrix $\widetilde B := \frac{1}{m}BB^\top $. 
Once this matrix is constructed, one can proceed in matching the two matrices $A$ and $\widetilde{B}$ by minimizing $\|A-P\widetilde{B}P^\top \|_F^2$ as in Equation~\eqref{eq:GMP}. However, this approach  faces several issues. On one hand, the matching solution depends on the collapsing method employed, and while multiple approaches for collapsing a graph are commonly used, none of them are constructed having the goal of graph matching in mind. 
Choosing the appropriate projection of the bipartite network among different options available, sometimes with parameter choices, is a complicated problem, 
and it is difficult to know a priori if a projection method is appropriate for subsequent graph matching. 
Finally, the estimated edges are amplifying noise in the original $B$, which makes the graph matching problem yet more challenging. 

To formulate the unipartite to bipartite matching problem directly, we need to introduce some information in common on the graphs $A$ and $B$, and we do so by considering a model for the edges of the bipartite graph as a function of $A$. 
We posit $B$ to be a random bipartite graph with a distribution depending on $A$ via an undirected graphical model \citep{Lauritzen1996,Wainwright2008}. 
Let $P^\ast\in\Pi_n$ be the permutation matrix associated with the mapping of  the vertices in $U_1$ to the  vertices in $U_2$, and let $W:= W(A,P^\ast) = (P^\ast)^\top  AP^\ast$ be the adjacency matrix obtained by permuting $A$ to match the order of the vertices in $B$. We model the columns of $B$ with a distribution that forms a Markov random field (MRF) with respect to $W$. Formally, a positive probability distribution $F$ on $\mathbb{R}^n$ forms a MRF  with respect to $W$ if a random vector $X\sim F$ satisfies the local Markov property of conditional independence
\begin{equation*}
    X_{i} \independent  X_{[n]\setminus \left\{\mathcal{N}_i(W)\cup\{i\}\right\}}\  | \ X_{\mathcal{N}_i(W)}, \quad\quad \forall i\in[n],
\end{equation*}
where $\mathcal{N}_i(W)=\{j\in[n]:W_{ij}=1\}$ is the set of neighbors of vertex $i$ in $W$. In other words, the distribution of $X_i$ can only be directly affected by other entries of $X$ that are connected to $i$ in the underlying graph $W$. We assume that the columns of $B$ are independent and identically distributed with distribution $F$, denoted by $B_1, \ldots, B_m\sim F$, where $B_k\in\real^n$ represents the $k$-th column of $B$, and we write the likelihood of $B$ as
\begin{equation}
    f(B) := \prod_{k=1}^m f_X(B_k). \label{eq:B-likelihood}
\end{equation}

If the graphical model $W$ is known, the (exact) graph matching problem in the MRF model just introduced can be formulated as finding the unshuffling permutation $P$ that aligns $A$ and $W$. In practice, however, $W$ is unobserved and some estimation process needs to be incorporated into the graph matching formulation. To make the problem more tractable, we focus in a subclass of graphical models in which the conditional distribution of each node can be expressed as a generalized linear model (GLM) \citep{mccullagh1989generalized,Yang2012}.
For a vector $X\sim F$ , the conditional distribution of $X_i=x_i$ given the rest of the variables is expressed as
\begin{align}
    f_X\left(x_{i}\ |\  x_{[n]\setminus \{i\}}\right) & \propto  \exp\Big( \beta_i x_i + \sum_{j\in\mathcal{N}_i(W)} \Theta_{ij} x_i x_j -2\Theta_{ii} C(x_i) \Big), \label{eq:GLM-node}
\end{align}
where $C(\cdot)$ is a known function, and $\gamma, \beta\in\real^n$ and $\Theta\in \real^{n\times n}$ are parameters of the model. Because of the local Markov property, only the entries $(i,j)$ of $\Theta$ for which $W_{ij}=1$ participate in the model, so by imposing the constraint
\begin{equation}
    \Theta_{ij}(1-W_{ij}) = 0,\quad\quad i,j\in[n], i\neq j, 
    \label{eq:Theta-constraint}
\end{equation}
which enforces the entries of $\Theta$ to be zero if $W_{ij}=0$, the likelihood of the model for a given $x\in\real^n$ can be expressed as
\begin{equation}
    f_X(x)\propto \exp\Big(\beta^\top  x + \sum_{i\neq j} \Theta_{ij} x_ix_j -2 \sum_{i=1}^n \Theta_{ii} C(x_i)\Big).
    \label{eq:GLM-full}
\end{equation}

The  model considered above introduces a dependence structure between edges in the bipartite graph, such that for any given vertex in $V$, the distribution of edges connecting to $U$ in  $B$ depends on the network between the vertices in $U$ encoded in $A$. In the context of the example given in Figure~\ref{fig:coauth}, our model considers that for any given paper, the likelihood that the paper is cited by  statistician $i$ is only potentially affected by the decisions made by other statisticians that are coauthors with $i$ (Equation~\eqref{eq:GLM-node}), and this assumption is enforced by the constraint in Equation~\eqref{eq:Theta-constraint}. 

The MRF framework allows us to model different edge distributions in the bipartite graph, and thus allows to handle weighted or unweighted networks by choosing an appropriate distribution. In this paper we focus on the following two special cases.

\begin{itemize}
    \item \textbf{Binary Bernoulli distribution.} The \emph{Ising model} \citep{ising1925beitrag} is popularly used in modeling binary-valued data.  
    The distribution of this model can be obtained by setting $C(X_i) = 0$, in which case, the node-wise distribution in Equation~\eqref{eq:GLM-node} takes the form of a logistic regression model, and the joint distribution of a binary random vector $X$ taking a value $x\in\{0,1\}^n$ is
    \begin{equation}
        \p(X=x) := \frac{1}{Z(\Theta, \beta)} \exp\Big(\sum_{i=1}^n \beta_i x_i + \sum_{i\neq j} \Theta_{ij} x_{i} x_{j} \Big), \label{eq:isingmodel}
    \end{equation}
    where $Z( \Theta, \beta)$ is the partition function or normalizing constant, given by
    \begin{equation}
        Z(\Theta,\beta) := \sum_{y\in\{0,1\}^n} \exp\Big(\sum_{i=1}^n \beta_i y_i + \sum_{i\neq j} \Theta_{ij} y_{i} y_{j}\Big).\label{eq:partfunc}
    \end{equation}
    The above expression requires the  calculation of the sum over all the possible elements in $\{0,1\}^n$, which has cardinality $2^n$, and hence it is not computable for a general $\Theta$. It is therefore sometimes more convenient to work with the pseudo-likelihood \citep{Bhattacharya2018,Ravikumar2010}, which is defined as the product of the conditional likelihoods of each variable, given by
    \begin{equation}
        \widetilde f_X(x) = \prod_{i=1}^n \Big[1 + \exp\big(-x_i\big(\beta_i + \sum_{i\neq j} \Theta_{ij}x_j\big) \big)\Big]^{-1}.
        \label{eq:Ising-pseudolikelihood}
    \end{equation}
    \item \textbf{Weighted Gaussian distribution.} By setting $C(X_i) = X^2_i$, the entries of $X$ can be modeled using a multivariate Gaussian distribution via a Gaussian graphical model \citep{Speed1986}.  The matrix $\Theta$ is  required to be positive definite, denoted by $\Theta\succ 0$,  so $\Theta^{-1}$ is a proper covariance matrix. Set $\mu=\Theta^{-1}\beta$. We denote $X\sim\mathcal{N}(\mu,\Theta^{-1})$ to the multivariate Gaussian distribution with the likelihood given by
    \begin{align}
         f_X(x) = & \left(2^n\pi^n\det\Theta^{-1}\right)^{-1/2}\times \exp\left( -\frac{1}{2}(x - \mu)^\top \Theta(x-\mu)\right), \label{eq:gaussian-likelihood}
    \end{align}
    and this matrix is constrained to satisfy Equation~\eqref{eq:Theta-constraint} and $\Theta \succ 0$. Observe that these constraints can be simultaneously satisfied for an arbitrary matrix $W$ since the diagonal entries of $\Theta$ are not constrained and the off-diagonals are never enforced to be non-zero.
\end{itemize}

Having defined the distribution of the edges in the bipartite graph, we now proceed to formulate the graph matching problem in this setting. The model imposes constraints in $\Theta$  that depend on $W$ according to Equation~\eqref{eq:Theta-constraint}, and this matrix $W=(P^\ast)^\top  A P^\ast$ is a function of the unknown permutation $P^\ast$. Therefore, it is natural to define the solution of the graph matching problem as the permutation matrix $\widehat{P}$ that constrains the zero-entries of the matrix $\Theta$ in such a way that it produces the best fit to the bipartite graph $B$ over all possible permutations $P\in\Pi_n$. The fit to the data can be measured using a loss function $\hat{\ell}(\Theta, \beta)$, such as the log-likelihood of $B$
\begin{align}
     \hat{\ell}(\Theta, \beta) = &\frac{1}{m}\sum_{k=1}^m\Big(\beta^\top B_k + \sum_{i\neq j}\Theta_{ij}B_{ik}B_{jk} -2\sum_{i=1}^n\Theta_{ii}C(B_{ik})\Big)   - \log(Z(\Theta, \beta)),\label{eq:likelihoodfunction}
\end{align}
and the graph matching problem from a unipartite to a bipartite graph can then be defined as
\begin{equation}
\begin{aligned}
\left(\widehat{P}, \widehat{\Theta}, \widehat{\beta}\right) =  & & \argmax_{P, \Theta, \beta} & \quad \hat{\ell}(\Theta,\beta) \label{eq:GMP-uni-bip}\\
& &\textrm{subject to} & \quad \Theta_{ij} (1 - (P^\top AP)_{ij}) = 0, \quad  i\neq j,\\
& & & \quad P \in \Pi_n, \quad \Theta\in\real^{n\times n}, \quad\beta\in\real^n.
\end{aligned}    
\end{equation}
The variables $\Theta$ and $\beta$ are nuisance parameters for graph matching, but as a by-product, this problem formulation also produces estimates for these parameters in the model.

\subsection{Inexact graph matching}
In the previous formulation, we assumed that the edges of $A$ and the underlying graphical model for $B$ can be perfectly matched, so, in a sense, the  formulation of the graph matching problem in Equation~\eqref{eq:GMP-uni-bip} corresponds to an exact matching.
In some situations, the edges of $A$ are noisy, and the correspondence with the edges of $W$ might not be exact. Additionally, the edges of $A$ can be weighted, and the magnitude of those weights can be associated with the magnitude of the corresponding entries in $\Theta$. In those scenarios, it is more appropriate to consider a model for $A$ that is parameterized by the graphical model structure $W$ and the weights in $\Theta$. In this section, we demonstrate how  to formulate a inexact graph matching when the entries of $A$ are an noisy version of the graphical model.

To account for noise, the graph unweighted $A$ can be modeled as an errorful observation of the underlying unshuffled graph $P^\ast W(P^\ast)^\top $, in which the edges are independently perturbed with some probability $p\in[0,1]$. This model for $A$ is common in the graph matching literature \citep{pedarsani2011privacy,korula2014efficient,JMLR:v15:lyzinski14a, Arroyo2018}, as it allows to measure the effect of the similarity or correlation between the edges of the graphs to be matched. Given $p$, $W$ and $P$, the likelihood of an adjacency matrix $A\in\{0,1\}^{n\times n}$ can be written as
\begin{equation}
    \p(A) = \prod_{i>j} (1-p)^{\mathbbm{1}\{A_{ij}= (PWP^\top )_{ij}\}} p^{\mathbbm{1}\{A_{ij}\neq (PWP^\top )_{ij}\}}. \label{eq:PA}
\end{equation}
To consider a weighted network $A$, the previous distribution can be extended to make the value of the edges of $A$ also depend on the magnitude of the corresponding entry in $\Theta$, but for simplicity we only focus on binary values of $A$ here.

By taking logarithms and arranging the terms in Equation~\eqref{eq:PA}, the log-likelihood can be written in a form that is more akin to the classical graph matching formulation~\eqref{eq:GMP}, given by
\begin{align*}
    \ell_A(W, P) & =  \log(p)\sum_{i>j}\mathbbm{1}\{A_{ij}\neq (PWP^\top )_{ij}\}\  + \log(1-p)\sum_{i>j}\mathbbm{1}\{A_{ij}=(PWP^\top )_{ij}\}\\
    & = -\lambda_p\|A-PWP^\top \|_F^2 + c_p,
\end{align*}
with $\lambda_p = \log((1-p)/p)$ and $c_p=\log(1-p)n(n-1)/2$. The constraint \eqref{eq:Theta-constraint} can be used here to relate $\Theta$ and $W$, and the graph matching problem is
\begin{equation}
\begin{aligned}
\left(\widehat{P}, \widehat{\Theta}, \widehat{\beta}\right) =  & & \argmax_{P, \beta, \Theta} & \quad \hat{\ell}(\Theta,\beta) - \lambda_p\|A-PWP^\top \|_F^2\label{eq:GMP-uni-bip-noise}\\
& &\textrm{subject to} & \quad W_{ij} = \mathbbm{1}\{\Theta_{ij}\neq 0\}, \quad \quad i\neq j,\\
& & & \quad P \in \Pi_n, \quad \Theta\in\real^{n\times n}, \quad \beta\in\real^n.
\end{aligned}    
\end{equation}
In this formulation, the edge perturbation probability $p$ can be seen as a penalty parameter that controls how similar the edges of the network $A$ and the graphical model $W$ are. Formulations \eqref{eq:GMP-uni-bip} and \eqref{eq:GMP-uni-bip-noise} are very similar, and in fact equivalent when $p=0$. In the following section, we present methods to approximate the solution of Equation~\eqref{eq:GMP-uni-bip}, but these can be extended to approximate the solution of formulation \eqref{eq:GMP-uni-bip-noise}.


\section{Solving the unipartite to bipartite matching problem}

Under the graphical model for the bipartite network described before, the graph matching problem seeks to match the edges of $A$ with the non-zero entries of the graphical model for $B$. 
In principle, one can approach this matching problem by first collapsing the bipartite graph into an estimator of $\Theta$ or the set of edges of the graphical model $\{(i,j):\Theta_{ij}\neq 0\}$, and then matching the vertices of this estimator---which can be represented as a (weighted) unipartite graph---with the graph $A$. 
Estimating the edges of an undirected graphical model is a well studied problem, and multiple existing methods have addressed this estimation from different perspectives, both for  discrete  \citep{Ravikumar2010,Yang2012,Bresler2015} and
continuous \citep{Meinshausen2006,Banerjee2008,Friedman2008a,Rothman2008} graphical models. 
Although this two step procedure can succeed in some situations, there are multiple challenges that discourage this approach. 
Finding an accurate estimator of the edge set of a graphical model might require a large amount of signal, and the error induced by this noisy estimate can significantly affect the matching performance. 
In addition, methods require conditions on the topology of the graph that do not always hold \citep{Bento2009,Loh2013,Bresler2015}, yielding those estimators inconsistent in some circumstances. 
Often, there are tuning parameters involved in this estimation process, such as $\ell_1$ regularization approaches for variable selection, and choosing the right model can be a complicated task \citep{Meinshausen2010}.

In contrast, our approach to matching the vertices of graphs $A$ and $B$ is based on finding an approximate solution to the optimization problem~\eqref{eq:GMP-uni-bip}. 
This is motivated by the following theorem, which is a consequence of the consistency of the maximum likelihood estimator in Markov random fields (see for example \citep{Wainwright2008}). The proof can be found in the Appendix. Before introducing the theorem, we define $\ell^\ast(\Theta, \beta) :=\e[\hat{\ell}(\Theta, \beta)]$ as the expected likelihood, and
$\widehat{\Theta}_P$ and $\widetilde{\Theta}_P$ as the profile likelihood estimators for $\Theta$ given $P$ using the  empirical likelihood $\hat{\ell}$ and expected likelihood $\ell^\ast$, respectively (a formal definition of these quantities is included in the proof). We denote the smallest and largest eigenvalues of a matrix $M$ as $\gamma_{\min}(M)$ and $\gamma_{\max}(M)$, and $a\wedge b$ denotes the minimum of $a$ and $b$.

\begin{theorem}
\label{thm:main}
Let $A\in\{0,1\}^{n\times n}$ be the adjacency matrix of a simple unipartite graph, $P^\ast\in\Pi_n$ be a permutation of size $n$, and  $B^{(m)}\in\real^{n\times m}$ be a random  matrix with its columns distributed as i.i.d. samples from some graphical model as in Equation~\eqref{eq:GLM-node} with $W=(P^\ast)^\top A P^\ast$. Let $\widehat{P}_m$ be a solution of \eqref{eq:GMP-uni-bip}.
\begin{enumerate}
    \item Suppose that the columns of $B^{(m)}$ come from the Ising model or the Gaussian graphical model, with $\Theta^\ast\in\real^{n\times n}$ and $\beta^\ast\in\real^n$  the parameters of the model such that Equation~\eqref{eq:Theta-constraint} holds. 
If (i) $\ell^\ast(\Theta, \beta)$ has a unique minimizer at $(\Theta^\ast, \beta^\ast)$ and (ii) $\Theta_{ij}^\ast\neq 0$ for all $(i,j)$ with $W_{ij}=1$, then, as $m\rightarrow\infty$,
$$\|\widehat{P}^\top _mA\widehat{P}_m - (P^\ast)^\top A P^\ast\|_F^2\overset{\mathbb{P}}{\rightarrow} 0.$$
If $A$ does not have non-identity automorphisms, then $\widehat{P}_m\overset{\mathbb{P}}{\rightarrow}P^\ast$ as $m\rightarrow\infty$.

\item Suppose that the columns of $B^{(m)}$ have a centered multivariate Gaussian distribution $B_i^{(m)}\overset{\text{i.i.d.}}{\sim}\mathcal{N}(0_n, (\Theta^\ast)^{-1})$, with $\Theta^\ast\in\real^{n\times n}$ some positive definite matrix  such that Equation~\eqref{eq:Theta-constraint} holds. Suppose that there exist positive constants $\alpha_1, \alpha_2$ with
\begin{equation}
    \alpha_1 \leq \min_{P\in\Pi_n}\gamma_{\min}(\widetilde\Theta_P), \quad \max_{P\in\Pi_n}\gamma_{\max}(\widetilde\Theta_P)\leq \alpha_2.
    \label{eq:theorem-condition-eigenvalues}
\end{equation}
Then, there are positive constants $C_1,C_2$ and $C_3$ that only depend on $\alpha_1$ and $\alpha_2$ such that if
\begin{equation}
    \left(\min_{P\neq P^\ast}\| \widetilde\Theta_P - \Theta^\ast\|_F^2\right)\wedge C_1 \geq C_2 \frac{(\|A\|_F^2 + n)\log n}{m}, \label{eq:thm-condition}
\end{equation}
then $\widehat{P}_m=P^\ast$ with probability at least $1-\frac{C_3}{n}$.

\end{enumerate}

\end{theorem}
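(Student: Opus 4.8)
The plan is to establish, on an event of probability at least $1-C_3/n$, the strict inequality $\hat\ell(\widehat\Theta_{P^*})>\hat\ell(\widehat\Theta_P)$ for every $P\in\Pi_n$ with $P\neq P^*$, which forces any solution $\widehat P_m$ of \eqref{eq:GMP-uni-bip} to equal $P^*$. Write $\hat\Sigma=\tfrac1m B^{(m)}(B^{(m)})^\top$, $\Sigma^*=(\Theta^*)^{-1}$, and for $P\in\Pi_n$ let $S(P)=\{(i,i):i\in[n]\}\cup\{(i,j):(P^\top AP)_{ij}=1\}$ be the index set on which $\Theta$ may be nonzero in \eqref{eq:GMP-uni-bip}, so $|S(P)|=n+\|A\|_F^2$ and $|S(P)\cup S(P^*)|\le k:=n+2\|A\|_F^2$. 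In the centered Gaussian case the objective in $\Theta$ (after profiling out $\beta$, which only substitutes the centered sample covariance for $\hat\Sigma$ and perturbs things by $O(n/m)$ in Frobenius norm) is an affine reparametrization of $\tfrac12\log\det\Theta-\tfrac12\langle\hat\Sigma,\Theta\rangle$, with population counterpart (up to constants) $\ell^*(\Theta)=\tfrac12\log\det\Theta-\tfrac12\langle\Sigma^*,\Theta\rangle$; this $\ell^*$ is concave on $\{\Theta\succ 0\}$, its gradient $\tfrac12(\Theta^{-1}-\Sigma^*)$ vanishes at $\Theta^*$, its Hessian is $\tfrac12\Theta^{-1}\otimes\Theta^{-1}$, and on the convex band $\mathcal B=\{\Theta:\alpha_1 I\preceq\Theta\preceq\alpha_2 I\}$ it is strongly concave with constant $c_1\asymp\alpha_2^{-2}$ (and its Hessian is bounded above there as well). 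By \eqref{eq:theorem-condition-eigenvalues} every $\widetilde\Theta_P$, and in particular $\widetilde\Theta_{P^*}=\Theta^*$, lies in $\mathcal B$.

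\textbf{Deterministic core.} Set $\delta:=\max_{P\in\Pi_n}\|(\hat\Sigma-\Sigma^*)_{S(P)\cup S(P^*)}\|_F^2$. First I would prove a perturbation bound for the profile estimator: since $\widehat\Theta_P$ maximizes $\hat\ell$ over the feasible set (which contains $\widetilde\Theta_P$), expanding $\hat\ell=\ell^*+\tfrac12\langle\Sigma^*-\hat\Sigma,\cdot\rangle$, using $\langle\nabla\ell^*(\widetilde\Theta_P),H\rangle=0$ for every symmetric $H$ supported on $S(P)$ (first-order optimality of $\widetilde\Theta_P$), strong concavity of $\ell^*$ along $[\widetilde\Theta_P,\widehat\Theta_P]$, and $\mathrm{supp}(\widehat\Theta_P-\widetilde\Theta_P)\subseteq S(P)$, one obtains $\|\widehat\Theta_P-\widetilde\Theta_P\|_F\le c_1^{-1}\|(\hat\Sigma-\Sigma^*)_{S(P)}\|_F\le c_1^{-1}\sqrt\delta$ — valid once $\delta$ is small enough that a ball argument (using global concavity of $\hat\ell$) keeps $\widehat\Theta_P$ inside a slightly enlarged band where the Hessian bound applies. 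Next, since $\Theta^*$ is feasible for $P^*$ we have $\hat\ell(\widehat\Theta_{P^*})\ge\hat\ell(\Theta^*)$, so it suffices to lower bound $\hat\ell(\Theta^*)-\hat\ell(\widehat\Theta_P)=\big[\ell^*(\Theta^*)-\ell^*(\widetilde\Theta_P)\big]+\big[\ell^*(\widetilde\Theta_P)-\ell^*(\widehat\Theta_P)\big]+\tfrac12\langle\Sigma^*-\hat\Sigma,\Theta^*-\widehat\Theta_P\rangle$. The second bracket is $\ge0$ by optimality of $\widetilde\Theta_P$ for $\ell^*$; the first equals $\Delta(P):=\ell^*(\Theta^*)-\ell^*(\widetilde\Theta_P)\ge\tfrac{c_1}{2}\|\widetilde\Theta_P-\Theta^*\|_F^2$ because $\nabla\ell^*(\Theta^*)=0$; and, splitting $\Theta^*-\widehat\Theta_P=(\Theta^*-\widetilde\Theta_P)+(\widetilde\Theta_P-\widehat\Theta_P)$, Cauchy--Schwarz together with the perturbation bound, the $S(P)\cup S(P^*)$-sparsity, and $\|\widetilde\Theta_P-\Theta^*\|_F\le\sqrt{2\Delta(P)/c_1}$ bounds the last term below by $-c'\big(\delta+\sqrt{\Delta(P)\,\delta}\big)$ with $c'=c'(\alpha_1,\alpha_2)$. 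Hence $\hat\ell(\widehat\Theta_{P^*})-\hat\ell(\widehat\Theta_P)\ge\Delta(P)-c'\big(\delta+\sqrt{\Delta(P)\,\delta}\big)$, which is strictly positive as soon as $\Delta(P)\ge c''\delta$ for a constant $c''=c''(\alpha_1,\alpha_2)$.

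\textbf{Stochastic control and conclusion.} Note $\delta$ is at most the sum of the $k$ largest values of $(\hat\Sigma_{ij}-\Sigma^*_{ij})^2$ over $i,j\in[n]$. Each $\hat\Sigma_{ij}-\Sigma^*_{ij}$ is an average of $m$ i.i.d. centered sub-exponential variables (products of jointly Gaussian coordinates whose covariance has eigenvalues in $[\alpha_2^{-1},\alpha_1^{-1}]$), with sub-exponential parameter $O(\alpha_1^{-1}/\sqrt m)$; a Bernstein inequality for a sum of $k$ squared such variables gives $\sum_{(i,j)\in S}(\hat\Sigma_{ij}-\Sigma^*_{ij})^2\le C(\alpha_1)(k+u)/m$ with probability $1-e^{-cu}$ for any fixed $S$ with $|S|\le k$, and a union bound over the $\binom{n^2}{k}\le(en^2/k)^k$ such sets with $u\asymp k\log n+\log n$ yields $\delta\le C_4(\alpha_1)\tfrac{(\|A\|_F^2+n)\log n}{m}$ with probability at least $1-1/n$ (using $k\le2(\|A\|_F^2+n)$ and $m\gtrsim n$, which \eqref{eq:thm-condition} implies). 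Finally, condition \eqref{eq:thm-condition} closes the argument on two fronts: its ``$\wedge\,C_1$'' part forces $(\|A\|_F^2+n)\log n/m$, hence $\delta$, below the threshold the localization step needs (take $C_1$ small relative to $1/C_2$), while its ``$\min_{P\neq P^*}\|\widetilde\Theta_P-\Theta^*\|_F^2\ge C_2(\cdots)/m$'' part, combined with $\Delta(P)\ge\tfrac{c_1}{2}\|\widetilde\Theta_P-\Theta^*\|_F^2$ and the bound on $\delta$, delivers $\Delta(P)\ge c''\delta$ for every $P\neq P^*$ once $C_2$ is large enough in terms of $\alpha_1,\alpha_2$; substituting into the deterministic core gives $\hat\ell(\widehat\Theta_{P^*})>\hat\ell(\widehat\Theta_P)$ for all $P\neq P^*$ on this event, so $\widehat P_m=P^*$ with probability at least $1-C_3/n$. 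I expect the main obstacle to be making the localization rigorous — uniformly in $P$ controlling the eigenvalues of the data-dependent $\widehat\Theta_P$ so that $c_1$ is legitimately usable — together with keeping the constants contributed by the two halves of \eqref{eq:thm-condition} aligned as they pass through the perturbation and concentration estimates.
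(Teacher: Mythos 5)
Your proposal is correct and follows essentially the same route as the paper's proof: a uniform-in-$P$ Frobenius concentration bound for $\hat\Sigma-\Sigma^*$ restricted to the sparse support sets, a Rothman-style perturbation/localization bound giving $\|\widehat\Theta_P-\widetilde\Theta_P\|_F\lesssim\|(\hat\Sigma-\Sigma^*)\|_{F,S(P)}$, a curvature lower bound $\ell^*(\Theta^*)-\ell^*(\widetilde\Theta_P)\gtrsim\|\widetilde\Theta_P-\Theta^*\|_F^2$, and a likelihood-comparison argument showing $\hat\ell(\widehat\Theta_{P^*})>\hat\ell(\widehat\Theta_P)$ for all $P\neq P^*$ under condition \eqref{eq:thm-condition}. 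The only differences are cosmetic: you lower-bound $\hat\ell(\widehat\Theta_{P^*})$ by $\hat\ell(\Theta^*)$ rather than carrying the $\Delta_{P^*}$ error term, and you use a sub-exponential Bernstein bound with a union bound over supports where the paper uses an entrywise Bickel--Levina bound times $|\mathcal{I}_P|$, both yielding the same $(\|A\|_F^2+n)\log n/m$ rate.
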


The first part of the  theorem shows that the optimal permutation $\widehat{P}_m$ that solves \eqref{eq:GMP-uni-bip} converges in probability to a permutation in the automorphism group of $A$, and when this group contains a unique element then it also converges to the correct unshuffling permutation. One implication of this theorem is that regardless of the topology of the graph, this solution can guess the edges of the graphical model consistently. 

The second part of the theorem provides conditions on the model parameters under which the graph matching solution $\widehat{P}$ is correct with high probability, under the special case in which the columns of $B^{(m)}$ have a Gaussian distribution. 
The rate on the left hand side of Equation~\eqref{eq:thm-condition} coincides with the squared Frobenius error rate for estimating $\Theta^\ast$ using a $\ell_1$-penalized estimator
\citep{Rothman2008}. By exactly recovering $P^\ast$, our theory  ensures that the  edges of the graphical model are correctly estimated without requiring further irrepresentability conditions
\citep{Ravikumar2010}, but we keep in mind that in our setting we have access to a permuted version of the edges through $A$.

Because of the restriction of the support of $\widetilde\Theta_P$, Equation~\eqref{eq:thm-condition} can be simplified  to a slightly stronger condition that only depends on the smallest non-zero entry of $\Theta^\ast$, given by $\theta_{\min} := \min\{|\Theta^\ast_{ij}|:A_{ij}=1\}$. 
By observing that $2\|\widetilde\Theta_P-\Theta^\ast\|_F^2 \geq \theta_{\min}^2\|P^\top A P - A\|_F^2$, a sufficient condition is that
$$\min_{P\neq P^\ast}\|P^\top A P - A\|_F^2\geq C_4\frac{(\|A\|_F^2 + n)\log n}{\theta_{\min}^2m}$$
for some positive constant $C_4$. The value on the left-hand side quantifies the regularity of the graph $A$ by computing the number of edge disagreements after permuting some of its vertex labels, and it is related to the difficulty of graph matching in the classic unipartite setting \citep{rel,Arroyo2018}.

Unfortunately, solving this optimization problem  requires searching over all elements in $\Pi_n$, a discrete set of cardinality $n!$, which is computationally infeasible. Nevertheless, the space of all possible undirected graphical models has a size $2^{n(n-1)/2}$, that is orders of magnitude larger than $|\Pi_n|$, which suggests that graphical model estimation is a harder problem than graph matching in this context. 
We focus on finding an approximate solution to this problem next.


\subsection{Matching via collapsing the bipartite network\label{sec:profile}}
In light of Theorem \ref{thm:main}, solving problem \eqref{eq:GMP-uni-bip} provides a principled heuristic for aligning the vertices of $A$ to those in $B$.
Given this, our model also offers some insights into when collapsing $B$ using one mode projection is justified.
Recall the collapsed graph $\widetilde B =\frac{1}{m}BB^\top $  previously defined, which counts the co-occurrence between every pair of vertices. Under a restricted Ising model, the following theorem shows that the profile likelihood for $\Theta$ in the bipartite--to--unipartite matching problem \eqref{eq:GMP-uni-bip} is equivalent to a unipartite matching between $A$ and $\widetilde{B}$. Hence, under this restricted model, collapsing the bipartite network and then matching the graphs is an appropriate method. For the proof of this result, see Appendix.
\begin{theorem}
\label{thm:main2}
Suppose that in the Ising model for the bipartite graph defined in Equation~\eqref{eq:GLM-full}, the parameters satisfy $\beta=0$ and $\Theta_{ij}=\theta_0\mathbbm{1}\{((P^{\ast})^\top AP^\ast)_{ij}=1\}$  
for all $i,j\in[n]$, $i\neq j$, with $\theta_0\in\real$, and the graph $A$ is non-empty. Define $\hat{P}_{\text{MLE}}$ and $\hat{\theta}_{\text{MLE}}$ as the  maximum likelihood estimators for $\theta_0$ and $P^\ast$. 
We then have that 
\begin{align}
[i.]& \quad\text{If }\hat\theta_{MLE}>0, \text{ then }  \hat{P}_{\text{MLE}} = 
    \argmin_{P\in\Pi_n}\|A-P\widetilde{B}P^\top \|_F^2, \label{eq:mle-profile1}\\
    [ii.]& \quad\text{If }\hat\theta_{MLE}<0, \text{ then }  \hat{P}_{\text{MLE}} = 
    \argmax_{P\in\Pi_n}\|A-P\widetilde{B}P^\top \|_F^2,
    \notag\\
    [iii.]&\quad \text{If }\hat\theta_{MLE}=0, \text{ then }  \hat{P}_{\text{MLE}} = 
    \Pi_n,
    \notag
\end{align}


\end{theorem}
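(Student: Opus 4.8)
\emph{Proof proposal.} The plan is to unwind the restricted Ising likelihood and show that, under $\beta=0$ and $\Theta_{ij}=\theta_0[(P^{\ast})^\top AP^{\ast}]_{ij}$, the log-likelihood depends on the permutation only through the single scalar $s(P):=\langle A,\,P\widetilde BP^\top\rangle$, the entrywise inner product, with $\widetilde B=\frac1m BB^\top$; after that, each of the three cases falls out of the joint optimality of the MLE.

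First I would substitute the parametrization into the Ising joint law \eqref{eq:isingmodel}. Because $A$, and hence $P^\top AP$, has zero diagonal, for any $x\in\{0,1\}^n$ we can rewrite $\sum_{i\ne j}(P^\top AP)_{ij}x_ix_j=x^\top(P^\top AP)x=(Px)^\top A(Px)$, so the per-column sufficient statistic is $t(x):=(Px)^\top A(Px)$. Two bookkeeping facts then make the $P$-dependence collapse. (a) The partition function \eqref{eq:partfunc} does not depend on $P$: since $z=Py$ is a bijection of $\{0,1\}^n$, $Z(\theta_0,P)=\sum_{y}\exp\!\big(\theta_0(Py)^\top A(Py)\big)=\sum_{z}\exp(\theta_0 z^\top Az)=:Z(\theta_0)$. (b) Averaging $t$ over the columns of $B$, and again using the zero diagonal to drop the $i=j$ terms, $\frac1m\sum_{k=1}^m t(B_k)=\frac1m\sum_{k}\sum_{i\ne j}(P^\top AP)_{ij}B_{ik}B_{jk}=\langle P^\top AP,\widetilde B\rangle=\langle A,\,P\widetilde BP^\top\rangle=s(P)$. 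Hence the average log-likelihood \eqref{eq:likelihoodfunction} of $B$ specializes to
\begin{equation*}
\ell(\theta_0,P)=\theta_0\, s(P)-\log Z(\theta_0),
\end{equation*}
and maximizing it in $(\theta_0,P)$ coincides with maximum likelihood estimation of $(\theta_0,P^\ast)$.

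Next I would record the elementary identity $\|A-P\widetilde BP^\top\|_F^2=\|A\|_F^2+\|\widetilde B\|_F^2-2\,s(P)$, using $\|P\widetilde BP^\top\|_F=\|\widetilde B\|_F$, which gives $\argmin_{P\in\Pi_n}\|A-P\widetilde BP^\top\|_F^2=\argmax_{P\in\Pi_n}s(P)$ and $\argmax_{P\in\Pi_n}\|A-P\widetilde BP^\top\|_F^2=\argmin_{P\in\Pi_n}s(P)$. Now let $(\hat P_{\mathrm{MLE}},\hat\theta_{\mathrm{MLE}})$ be a maximizer of $\ell$. Fixing $\theta_0=\hat\theta_{\mathrm{MLE}}$, the term $\log Z(\hat\theta_{\mathrm{MLE}})$ is constant in $P$, so from $\ell(\hat\theta_{\mathrm{MLE}},P)\le\ell(\hat\theta_{\mathrm{MLE}},\hat P_{\mathrm{MLE}})$ for every $P$ we get that $\hat P_{\mathrm{MLE}}$ maximizes $P\mapsto\hat\theta_{\mathrm{MLE}}\,s(P)$ over $\Pi_n$. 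If $\hat\theta_{\mathrm{MLE}}>0$ this is $\argmax_P s(P)=\argmin_P\|A-P\widetilde BP^\top\|_F^2$, which is [i]; if $\hat\theta_{\mathrm{MLE}}<0$ it is $\argmin_P s(P)=\argmax_P\|A-P\widetilde BP^\top\|_F^2$, which is [ii]. For [iii], if $\hat\theta_{\mathrm{MLE}}=0$ then $\ell(0,P)=-\log Z(0)=-n\log 2$ for \emph{every} $P\in\Pi_n$; since $(0,\hat P_{\mathrm{MLE}})$ attains the global maximum, that maximal value equals $-n\log2$, so $(0,P)$ is a global maximizer for every $P$, i.e.\ the set of maximum likelihood estimators for the permutation is all of $\Pi_n$.

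The derivation is short once the parametrization is unwound, so the only places that need care are, first, the invariance of the partition function under $P$ together with the harmless discarding of the diagonal terms, both of which use that $G$ is a simple graph --- this is exactly what reduces the $P$-dependence to the scalar $s(P)$; and second, the implicit assumption that a finite MLE exists. On the latter: the hypothesis that $A$ is non-empty guarantees that $t$ is not constant (it is $0$ at $x=0$ and at least $2$ for suitable $x$), so $\log Z$ is strictly convex and the maximizer in $\theta_0$ is identified as long as $s(\hat P_{\mathrm{MLE}})$ lies strictly between $\min_x t(x)=0$ and $\max_x t(x)=2|E|$; the boundary-degenerate situations (for instance $\widetilde B$ with vanishing off-diagonal, which drives $\hat\theta_{\mathrm{MLE}}\to-\infty$) are excluded by taking ``the MLE'' to be well defined. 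I would also note in passing that for binary $B$ the matrix $\widetilde B=\frac1m BB^\top$ is entrywise nonnegative, so in fact $s(P)\ge0$ for all $P$.
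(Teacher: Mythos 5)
Your proposal is correct, and while its first half coincides with the paper's (the same reduction showing that under the restricted parametrization the average log-likelihood is $\theta\,s(P)-\log Z(\theta)$ with $s(P)=\operatorname{Tr}(P^\top AP\widetilde B)=\langle A,P\widetilde BP^\top\rangle$ and a $P$-independent partition function, plus the Frobenius identity linking $s(P)$ to $\|A-P\widetilde BP^\top\|_F^2$), the mechanism you use to conclude is genuinely different from the paper's. The paper works with the profile likelihood in $\theta$: it derives the score equation $s(P)=\Psi'(\hat\theta_P)$, proves $\Psi''>0$ (this is where non-emptiness of $A$ enters), observes that the profile objective $\hat\theta_P\Psi'(\hat\theta_P)-\Psi(\hat\theta_P)$ is monotone in $\hat\theta_P$ with direction given by the sign of $\hat\theta_P$, and finishes each case by a contradiction argument; in case [iii] it additionally shows $\hat\theta_P=0$ for every $P$. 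You instead invoke joint optimality directly: since $(\hat\theta_{\mathrm{MLE}},\hat P_{\mathrm{MLE}})$ maximizes the likelihood, $\hat P_{\mathrm{MLE}}$ must maximize $P\mapsto\hat\theta_{\mathrm{MLE}}\,s(P)$, and the three cases follow from the sign of $\hat\theta_{\mathrm{MLE}}$, with [iii] handled by noting $\ell(0,\cdot)$ is constant in $P$. Your route is more elementary — it needs no convexity of the log-partition function, no monotone relation between $s(P)$ and $\hat\theta_P$, and no contradiction step — while the paper's profile analysis buys extra structural information (strict convexity of $\Psi$, the monotone map $s(P)\mapsto\hat\theta_P$, and $\hat\theta_P\equiv 0$ in case [iii]) that is not needed for the statement itself. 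Both arguments, as written, establish that the MLE permutation lies in the relevant $\argmin$/$\argmax$ set; the reverse inclusion (every such optimizer is an MLE) is immediate in your framework since any $P'$ with $s(P')=s(\hat P_{\mathrm{MLE}})$ paired with $\hat\theta_{\mathrm{MLE}}$ attains the same likelihood value, and your brief caveat about existence of a finite MLE matches the paper's implicit assumption.
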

\noindent Stated simply, when all the non-zero entries of $\Theta$ have the same magnitude and sign and depending on the properties of the MLE of $\theta$, estimating $P$ by collapsing prior to graph matching will yield a consistent estimate of $P$ (case i) or an inconsistent estimate (case ii).  Moreover, in the absence of an accurate estimate of $\theta$, there is no way to know from the data directly whether collapsing is the exact right pre-processing step.
We also note that the equivalence in Theorem \ref{thm:main2} is only true under a restricted model framework, but even under this model,  $\widetilde{B}$ and $A$ need not have the same support of non-zero entries, even for large $m$. This further complicates solving the optimization problem \eqref{eq:mle-profile1} in practice.

The equivalence in Theorem \ref{thm:main2} is true under a restricted model framework in which there is no individual vertex effect (controlled by $\beta$) and the strength of the connections is the same (i.e., the non-zero entries of $\Theta$ are all equal). One may wonder whether collapsing the networks using a one-mode projection is appropriate under more general settings. Figure~\ref{tab:collapsing-doesnt-work} presents empirical counterexamples suggesting that in general, this approach yields the incorrect edge structure, and thus a classical matching of $A$ and the collapsed graph will not recover the correct vertex alignment. In all cases, a bipartite graph was generated using the Ising model with the parameters described in the second column of the table by sampling $m=2000$ bipartite vertices to ensure a good estimation of the collapsed graphs $\widetilde{B}$. We additionally include the matrix of Pearson correlations between the rows of $B$ as an additional collapsing method, denoted by  $\widehat{\Sigma}$. 

The first example (first row of Figure~\ref{tab:collapsing-doesnt-work}) considers a setting in which all the non-zero elements of $\Theta^\ast$ are equal to the same constant, but the entries of $\beta^\ast$ are not equal to zero. The vertices corresponding to the largest elements of $\beta^\ast$ are more likely to independently connect to vertices on the bipartite graph, causing a higher likelihood of co-occurrence due to their high degree, and thus, a one-mode projection of the data suggests a strong relation between the first and third vertices, which are not connected on $A$, yielding an incorrect matching result. Nevertheless, in this scenario, the sample correlation matrix $\widehat{\Sigma}$ still reflects a higher connectivity between the first and second vertices. In the second example (second row of the table) we consider a setting in which differences in the values of $\Theta^\ast$ yield an incorrect matching solution. The vertices are partitioned into two groups ($\{1,2,3\}$ and $\{4,5,6\}$) that are disconnected on $A$, and the vertices in the second group are fully connected between each other. A smaller weight on $\Theta^\ast$ for the connections of the second group causes the collapsed graphs to highlight the connections on the first group regardless on whether the vertices are actually connected on $A$, and thus a classical graph matching with a collapsed graphs will result in aligning the vertices on different groups.

\begin{figure*}[ht!]
    \centering
    \begin{tabular}{c|c c|c|c}
    	Unipartite graph & \multicolumn{2}{|c|}{Bipartite  parameters}  & One-mode projection & Correlation  \\ 
    	 $A$ & $\beta^\ast$ &  $\Theta^\ast$ & $\widetilde{B}$  & $\widehat{\Sigma}$  \\ 
    	\hline 
    	\includegraphics[width=0.2\textwidth]{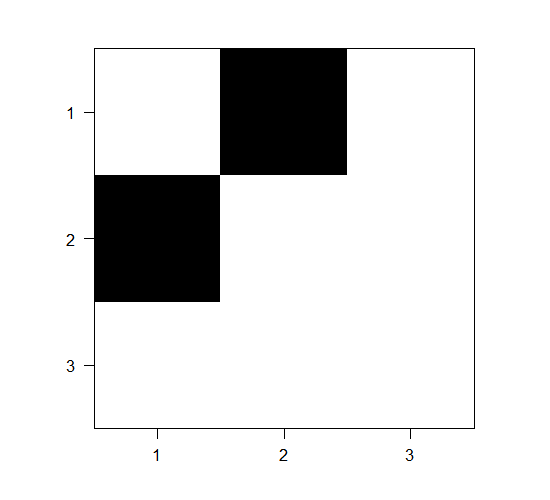} & \includegraphics[width=0.08\textwidth]{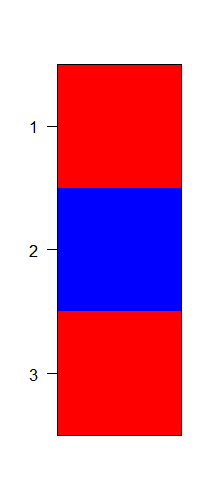} & \includegraphics[width=0.2\textwidth]{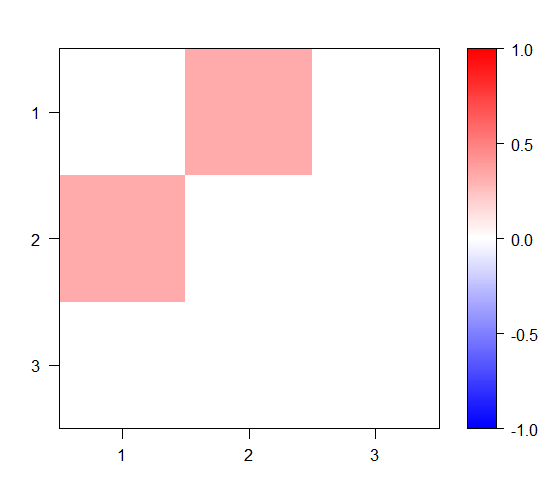} & \includegraphics[width=0.2\textwidth]{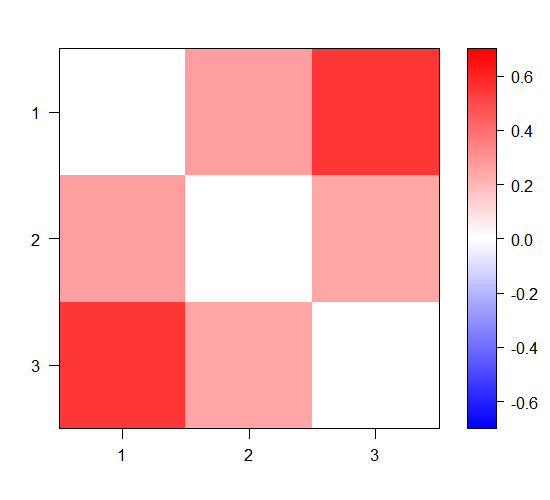}  &  \includegraphics[width=0.2\textwidth]{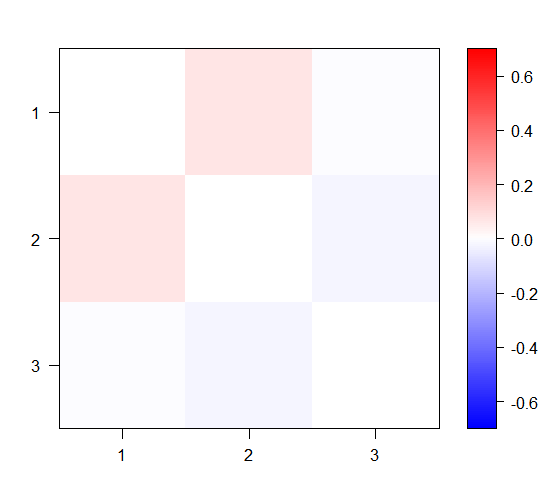}\\ 
    	\hline 
    	\includegraphics[width=0.2\textwidth]{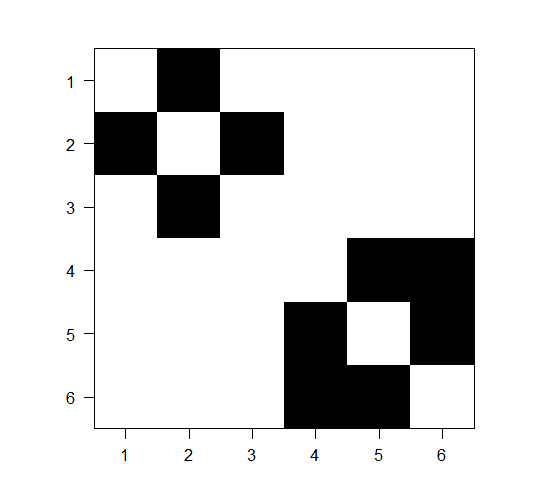} & \includegraphics[width=0.065\textwidth]{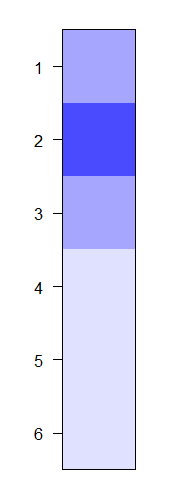} & \includegraphics[width=0.2\textwidth]{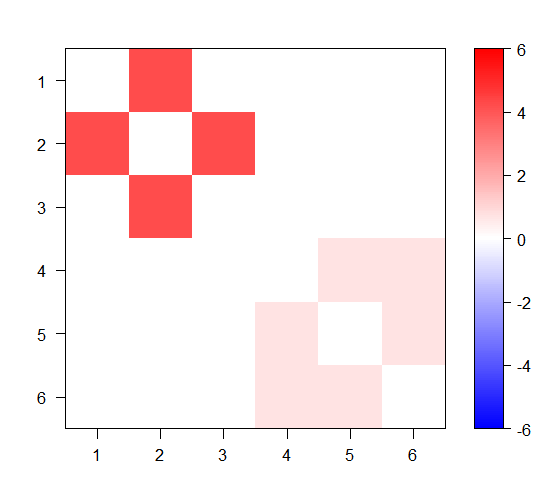} & \includegraphics[width=0.2\textwidth]{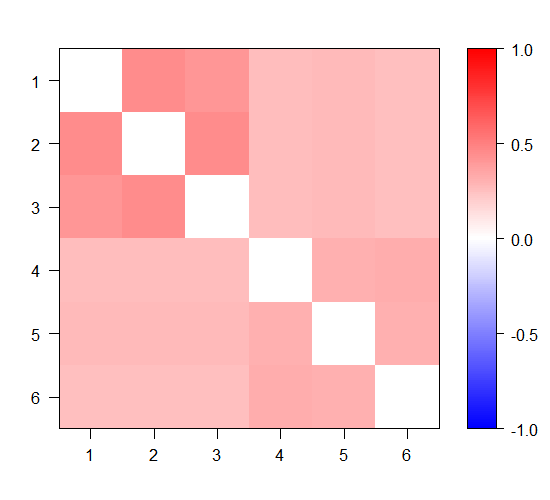}  &  \includegraphics[width=0.2\textwidth]{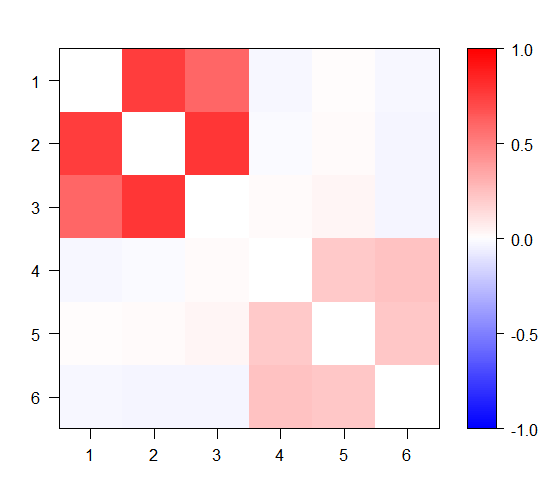}
    \end{tabular} 
    \caption{Empirical examples of cases when collapsing the bipartite graph to a one-mode projection or correlation matrix yields a different edge structure than the unipartite graph, and an incorrect  graph matching solution. The bipartite graphs are generated from the Ising model using the parameters shown on the second and third columns, and the collapsed bipartite graphs  based on one-mode projection  (fourth columns) or Pearson correlation (fifth column) are also shown. The first row illustrates that when $\beta$ is not zero, the effect of the vertex degree can create strong relations on the one-mode projection that are not present in the unipartite graph. The second row shows that the strength of the associations in $\Theta$ can result in spurious edges on the one-mode projection or correlation matrices.}
    \label{tab:collapsing-doesnt-work}
\end{figure*}

\subsection{Matching via inverse covariance estimation}
Computing the likelihood of the bipartite graph is challenging because  the calculation of the partition function is infeasible for some distributions, including for the Ising model. 
For a Gaussian graphical model however, the likelihood takes a simpler form, as observed in Equation~\eqref{eq:gaussian-likelihood}, and the MLE for $\mu$ is $\hat{\mu} = \frac{1}{m}\sum_{k=1}^mB_k$ (with $B_k$ being row $k$ of $B$), which does not depend on $P$, so the log-likelihood of the bipartite graph can be expressed only in terms of $\Theta$ as
\begin{equation}
    \hat{\ell}(\Theta) = \log \det\Theta - \Tr(\hat{\Sigma}\Theta) \label{eq:GMP-ggm-loss},
\end{equation}
 where  $\widehat{\Sigma} = \frac{1}{m}\sum_{k=1}^m(B_k - \hat{\mu})(B_k - \hat{\mu})^\top $ is the sample covariance of the rows of the bipartite incidence matrix (see for example \citep{Friedman2008a}). The bipartite--to--unipartite matching problem constrains $\Theta$ to satisfy  $\Theta_{ij}(1-(P^\top AP)_{ij})=0$ for $i,j\in[n]$.

Solving the graph matching optimization problem for both $P$ and $\Theta$ is computationally infeasible due to the combinatorial constraint set. 
On the other hand, if some $P$ is fixed, finding the profile MLE for $\Theta$, denoted as  $\widehat{\Theta}_P$, is a convex optimization problem and therefore can be solved efficiently. Using this profile MLE,  the fit of different permutations can be compared by evaluating $\hat{\ell}( \widehat{\Theta}_P)$.

To find an approximate solution of  \eqref{eq:GMP-uni-bip} for both $P$ and $\Theta$, we relax the graph matching problem  by replacing $P\in\Pi_n$ with a real-valued matrix $D\in\mathcal{D}_n$, where $\mathcal{D}_n$ denotes the set of $n\times n$ doubly stochastic matrices
$\mathcal{D}_n = \left\{D\in\real^{n\times n}:
D1_n=D^\top 1_n = 1, D_{ij}\geq 0 \right\}$.
This is effectively relaxing the permutation search space to its convex hull, and it is a common strategy in optimization problems over the space of permutations \citep{FAQ,rel,zhang2019kergm,fiori2013robust} since the solution typically lies on the extreme points of this polytope  \citep{maron2018probably}.
Additionally, the equality constraints in \eqref{eq:GMP-uni-bip} are replaced with a penalty function \citep{nocedal2006numerical} with a penalty parameter $\lambda >0$. We choose a non-smooth penalty based on the $\ell_1$-norm  to induce a sparse solution, so if $\lambda$ is sufficiently large, the equality constraint is enforced. However, as we will see later, a large value of $\lambda$ makes the optimization algorithm more likely to get stuck in local minima. The new optimization problem then becomes
\begin{equation}
    \begin{aligned}
    \left(\widehat{D}^{(\lambda)}, \widehat{\Theta}^{(\lambda)}\right) =  & & \argmax_{D,  \Theta} & \bigg\{\log \det \Theta - \Tr(\hat{\Sigma}\Theta) -  \lambda \sum_{i\neq j}\left|(1-(D^\top AD)_{ij})\Theta_{ij}\right| \bigg\}
    \label{eq:GMP-inverse-cov}\\
    & &\textrm{subject to} &  \quad D \in \mathcal{D}_n, \quad \Theta\in\real^{n\times n}, \quad \Theta \succ 0.
    \end{aligned}    
\end{equation}
The optimization problem above is closely related to lasso penalized problems for estimating the inverse covariance in a Gaussian graphical model \citep{Banerjee2008,Friedman2008a, Rothman2008}. In particular, for a fixed $D=\frac{1}{n}1_n1_n^\top $ (so no prior information about the correct matching is given) all off-diagonal entries of $\Theta$ are equally penalized.
On the other hand, if $D=P^\ast$, only the entries of $\Theta$ for which $\Theta^\ast_{ij}=0$ are penalized, which potentially results in better estimation of the graphical model when the matching of the vertices is performed correctly.

In order to solve~\eqref{eq:GMP-inverse-cov}, we use an alternating minimization strategy by fixing $D$ or $\Theta$, and solve the resulting problem to obtain a sequence of estimators $\{(\widehat{D}^{(t,\lambda)}, \widehat{\Theta}^{(t,\lambda)})\}_{t=1}^{T^\ast}$.
We iterate this until a maximum number of iterations $T^\ast$ is reached. 
The initialization value is set to $\hat{D}^{(1,\lambda)}=\frac{1}{n}1_n1_n^\top $. Optimizing $\Theta$ with a fixed $\widehat{D}^{(t,\lambda)}$ requires to solve a problem analogous to the graphical lasso, given by
\begin{equation}
\widehat{\Theta}^{(t,\lambda)} = \argmax_{\Theta\succ 0} \left\{ \log \det\Theta - \text{Tr}(\widehat{\Sigma}\Theta) - \lambda\sum_{i\neq j}\omega_{ij}^{(t,\lambda)} |\Theta_{ij}| \right\} , \label{eq:step-graphicallasso}
\end{equation}
with $\omega^{(t,\lambda)}_{ij} = 1-(\hat{D}^{(t,\lambda)^\top }A \hat{D}^{(t,\lambda)})_{ij}$. This is a convex optimization problem that can be efficiently solved by the method presented in \citep{Friedman2008a} 
and implemented in the \texttt{glasso} R package. Optimizing $D$ given a fixed $\hat{\Theta}^{(t,\lambda)}$ results in a non-convex quadratic assignment problem 
\begin{equation}
\hat{D}^{(t,\lambda)} = \argmin_{D\in\mathcal{D}_n} \Tr\left(D^\top AD|\hat{\Theta}^{(t,\lambda)}|\right) , \label{eq:step-faq}
\end{equation}
with $|\hat{\Theta}^{(t,\lambda)}|$ the entrywise absolute value operator. We solve the problem above using the Fast Approximate Quadratic assignment problem (FAQ) algorithm \citep{FAQ,rel}, which uses the Frank-Wolfe methodology to obtain an approximate  local minimizer of \eqref{eq:step-faq}, and then  projects $\hat{D}^{(t,\lambda)} $ onto the set of permutations $\Pi_n$ to obtain a permutation $\hat{P}^{(t,\lambda)} $. This is implemented with the \texttt{IGraphMatch} package (\url{https://github.com/dpmcsuss/iGraphMatch}).
The alternating minimization is repeated until the solution has converged (measured by the change in the loss function \eqref{eq:GMP-ggm-loss} between consecutive iterations) or  $T^\ast$ iterations have been reached. The  parameter $\lambda$ controls the sparsity of the solution in \eqref{eq:step-graphicallasso}. To select an appropriate value, we repeat the procedure for a sequence of values $\lambda_1, \ldots, \lambda_{S^\ast}$, resulting in a set of solutions $\{\hat{P}^{(t,\lambda_s)}, t\in[T^\ast], s\in[S^\ast]\}$. The optimal permutation among this set is chosen by calculating the value of the loss function $\hat{\ell}(\widehat{\Theta}_P)$ in Equation \eqref{eq:GMP-ggm-loss}.  This process is summarized in Algorithm~\ref{alg:Bip-inv}.

The method outlined above can be understood as a matching between the optimal permutation and the non-zero elements of the population inverse covariance matrix of the rows of $B$. In a Gaussian distribution, this non-zero elements actually coincide with the edges of the undirected graphical model for $B$. This is not necessarily the case in non-Gaussian distributions, but penalized inverse covariance estimators sometimes still produce accurate results for estimating undirected graphical models even in those cases \citep{Banerjee2008,Loh2013}, and thus the matching solution can potentially succeed. Alternatively, another approach based on the appropriate distribution of the data is described next. 

\begin{algorithm}
 		\caption{Unipartite to bipartite matching via penalized inverse covariance estimation}
 		\begin{algorithmic} 
 			\Input Adjacency matrix $A$, incidence matrix $B$.
 			\FOR{each $\lambda \in \{\lambda_s\}_{s=1}^{S^\ast}$}
 			\State Initialize $\hat{D}^{(1,\lambda)}=\frac{1}{n}1_n1_n^\top $.
 			\FOR{$t=1, \ldots, T^\ast$, or until convergence}
 			\State Update $\hat{\Theta}^{(t,\lambda)}$ by solving \eqref{eq:step-graphicallasso}.
 			\State Update $\hat{D}^{(t+1,\lambda)}$ by solving \eqref{eq:step-faq}.
 			\State Set $\hat{P}^{(t+1,\lambda)}$ as the projection of $\hat{D}^{(t,\lambda)}$ into $\Pi_n$.
 			\ENDFOR
 			\ENDFOR
 			\State Choose the permutation with the largest value of $\hat{\ell}( \hat{\Theta}_P)$ among the permutations $P\in \{P^{(t,\lambda_s)}, s\in[S^\ast], t\in[T^\ast]\}$.
 			\Output Permutation $\widehat{P}$, inverse covariance estimate $\hat{\Theta}_{\widehat{P}}$. 
 		\end{algorithmic}
 		\label{alg:Bip-inv}
 	\end{algorithm}
 	
\subsection{A pseudo-likelihood approach}
Because of the intractability of the likelihood, an alternative approach is to substitute the loss function of problem~\eqref{eq:GMP-uni-bip} with the pseudo-likelihood. This approach is standard for graphical model learning in statistics and machine learning \citep{Meinshausen2006,Ravikumar2010}, and here we show how this can be used for graph matching as well. We follow a similar procedure as in the previous section to obtain an approximate solution to  the pseudo-likelihood problem. The method in this section is outlined for an Ising model, but the same approach can be followed for other distributions.

The log-pseudo-likelihood of the bipartite graph for an Ising model is given by
\begin{equation}
    \widetilde{\ell}(\Theta, \beta) = \sum_{j=1}^n\left\{\sum_{k=1}^m -\log \Big(1 + \exp \big[-B_{jk}\big(\beta_j + B_k^\top \Theta_{j}\big)\big]\Big)\right\}, \label{eq:pseudolikelihood}
\end{equation}
with $\Theta\in\real^{n\times n}, \text{diag}(\Theta)=0, \beta\in\real^n$,  $\Theta_j$ indicates the $j$-th column of $\Theta$ and $\beta_j$ indicates the $j$-th entry of $\beta$. As before, given a fixed value of $P$, the profile pseudo-likelihood estimators $\hat{\Theta}_P$ and $\hat{\beta}_P$ can be numerically obtained via convex optimization to compute  $\widetilde{\ell}(\hat{\Theta}_P, \hat{\beta}_P)$.

To find an approximate solution for $P$ in maximizing \eqref{eq:pseudolikelihood}, we follow the same process as in the previous section, by relaxing $P$ to be a doubly stochastic matrix and replacing the constraints with a penalty term. This results in an optimization problem analogous to \eqref{eq:GMP-inverse-cov}, with the inclusion of the variable $\beta$, and with $\Theta\in\real^{n\times n}$ only constrained to have zeros in the diagonal. 
The alternating optimization proceeds by maximizing $(\Theta,\beta)$  and $D$. 
Given a fixed value $\hat{D}^{(t,\lambda)}$, the resulting problem for $(\Theta,\beta)$ can be split into $n$ separate problems for $(\Theta_j,\beta_j),j\in[n]$, given by 
\begin{align}
& (\hat{\Theta}_j^{(t,\lambda)}, \hat{\beta}_j^{(t,\lambda)}) =\argmax_{\Theta_j\in\real^n,\beta_j\in\real} \bigg\{-\log \Big(1 + \exp \big[-B_{jk}\big(\beta_j + B_k^\top \Theta_{j}\big)\big]\Big) -\lambda\sum_{i=1}^n \omega_{ij}^{(t,\lambda)} |\Theta_{ij}| \bigg\} , \label{eq:step-logreglasso}
\end{align}
with $\Theta_{jj}=0$. The optimization above is a lasso penalized logistic regression \citep{Friedman2010}, and the solution is implemented using the
\texttt{glmnet} R package \citep{friedman2009glmnet}. Algorithm~\ref{alg:Bip-pseudo} summarizes the process for estimating $P$ with the pseudolikelihood loss function.

\begin{algorithm}
 		\caption{Unipartite to bipartite matching via penalized pseudolikelihood}
 		\begin{algorithmic} 
 			\Input Adjacency matrix $A$, incidence matrix $B$.
 			\FOR{each $\lambda \in \{\lambda_s\}_{s=1}^{S^\ast}$}
 			\State Initialize $\hat{D}^{(1,\lambda)}=\frac{1}{n}1_n1_n^\top $.
 			\FOR{$t=1, \ldots, T^\ast$, or until convergence}
 			\FOR{$j=1,\ldots, n$}
 			    \State Update $(\hat{\Theta}^{(t,\lambda)}_j, \hat{\beta}^{(t,\lambda)}_j)$ by solving \eqref{eq:step-logreglasso}.
 			\ENDFOR
 			\State Update $\hat{D}^{(t+1,\lambda)}$ by solving \eqref{eq:step-faq}.
 			\State Set $\hat{P}^{(t+1,\lambda)}$ as the projection of $\hat{D}^{(t,\lambda)}$ into $\Pi_n$.
 			\ENDFOR
 			\ENDFOR
 			\State Choose the permutation with the largest value of $\widetilde{\ell}(\hat{\Theta}_P)$ among $P\in \{P^{(t,\lambda_s)}, s\in[S^\ast], t\in[T^\ast]\}$.
 			\Output Permutation $\widehat{P}$, estimated parameters $\hat{\Theta}_{\widehat{P}}$ and $\hat{\beta}_{\widehat{P}}$. 
 		\end{algorithmic}
 		\label{alg:Bip-pseudo}
 	\end{algorithm}
 	
\subsection{Seeded graph matching}
In some applications, a partial alignment of the vertices is known a priori, and the goal is to match the remaining vertices. This setting is known as \emph{seeded} graph matching, and vertices with a known correspondence are referred as \emph{seeds}. The methods we developed can incorporate seed vertices by adapting the Frank-Wolfe methodology of \texttt{FAQ} to incorporate seeds as in  \citep{JMLR:v15:lyzinski14a,Fishkind2019}. To formulate this extension, without loss of generality, assume that the first $n_1<n$vertices  are seeds, and let $R\in\Pi_{n_1}$ be the known corresponding permutation that unshuffles these  vertices. The seeded graph matching problem can then be formulated in an analogous way as problem~\eqref{eq:GMP-uni-bip} with the additional constraint that $ P = (R\oplus Q)\in \Pi_n$, with $Q\in\Pi_{n-n_1}$ and $\oplus$ is the direct sum of matrices, such that $P_{ij}=R_{ij}$ for $i,j\in[n_1]$, $P_{ij}=Q_{i-n_1 + j-n_1}$ for $i,j\in[n]\setminus[n_1]$, and $P_{ij}=0$ otherwise. This restriction only requires to modify the update of $\hat{D}^{(t+1,\lambda)}$ in Algorithms~\ref{alg:Bip-inv} and \ref{alg:Bip-pseudo} via Equation~\eqref{eq:step-faq}. Thus, in order to find an estimator under this new constrain, we introduce an intermediate step to estimate the part of the unshuffling permutation that is unknown, and update $\hat{D}^{(t, \lambda)}$ accordingly as
\begin{align}
\hat{J}^{(t,\lambda)} & = \argmin_{J\in\mathcal{D}_{n-n_1}} \Tr\left((R\oplus J)^\top A(R\oplus J)|\hat{\Theta}^{(t,\lambda)}|\right), \label{eq:step-seeded}\\
\hat{D}^{(t,\lambda)} & = R \oplus \hat{J}^{(t,\lambda)}.
\end{align}
The resulting optimization problem is very similar to Equation~\eqref{eq:step-faq}, and the Frank-Wolfe methodology can be used analogously to approximately solve \eqref{eq:step-seeded} via the \texttt{SGM} algorithm  \citep{Fishkind2019},  implemented in the \texttt{IGraphMatch} package.

\section{Simulations\label{sec:simulations}}
The accuracy of the proposed methods is evaluated in synthetic data. Given a graph $A$ (with a distribution that will be specified), the columns of the bipartite graph $B$ follow the Ising model distribution \eqref{eq:isingmodel}. The number of common nodes  in the graphs is set to $n=100$ while varying $m$, the nodes that are only in the bipartite graph. The value of the parameters is set to $\Theta_{ij}^\ast=0.4$ for all $(i,j)$ for which $A_{ij}=1$, and $\mu_i=-1/2\sum_{j=1}^n\Theta_{ij}^\ast$ for $i\in[n]$ to make the marginal distributions of the rows of $B$ approximately the same. The graph $A$ is generated using two different models:
\begin{itemize}
    \item\textbf{Chain graph:} Given the ordering of the vertices according to the rows and columns of the adjacency matrix, each vertex is connected by an edge to its adjacent neighbor, such that $A_{ij}=A_{ji}=1$ for $i\in[n-1]$ and $j=i+1$, 
    \item\textbf{Erd\H{o}s-R\'enyi (ER) graph:} The edges of $A$ are independent Bernoulli random variables with $\p(A_{ij}=1)=0.05$ for $i>j$.
\end{itemize}
In the first scenario with a chain graph, the simple dependency structure usually facilitates the estimation of the edges in the graphical model, but the regularity of the structure makes vertex matching a hard task since only a few errors in estimating the edges of $A$ can destroy the identifiability of the vertex order. On the contrary, the long-range correlations in the ER graph make graphical model estimation a harder problem \citep{Bento2009}, but vertex matching is an easier task due to the irregular structure of $A$ \citep{Arroyo2018}. 

In addition to the bipartite matching methods introduced in Algorithms~\ref{alg:Bip-inv} (B-InvCov) and \ref{alg:Bip-pseudo} (B-Pseudo), we measure the performance of unipartite matching between $A$ and a collapsing of $B$ to an $n\times n$ matrix. 
We consider the one-mode projection $\widetilde{B}$ (C-OMP), the sample covariance matrix $\hat{\Sigma}$ of the rows of $B$ (C-Cov), and estimates of the edges of the graphical model for $B$ based on the graphical lasso \citep{Friedman2008a} (C-GLasso) and the method of \citep{Meinshausen2006} (C-M\&B). 
To implement the last two estimators, we use the R package \texttt{huge}  \citep{Zhao2012a}, and the parameter tuning is done with the default options of this package. It is important to remark that many alternative methods exist, especially for the problem of learning a graph dependency structure \citep{she2019indirect}, but we restrict to \citep{Meinshausen2006}  and  \citep{Friedman2008a} as these are common approaches for the Ising models.
Given a bipartite graph collapsing, graph matching between $A$ and the collapsed graph is performed with the FAQ algorithm \citep{FAQ}. In Algorithms~\ref{alg:Bip-inv} and \ref{alg:Bip-pseudo}, we set the maximum number of iterations $T^\ast$ as $20$,  and the grid for the tuning parameter $\lambda$ contains $S^\ast=10$ values logarithmically spaced between $10^{-2.5}$ and $10^{-0.5}$. An implementation of our methods can be found at \url{https://github.com/jesusdaniel/rBipartiteUnipartiteMatch}.

The methods result in a estimator $\hat{P}\in\Pi_n$ of the true unshuffling permutation $I_n$. We measure the vertex matching error as $\frac{1}{2n}\|\hat{P}- I_n\|_F^2$, which counts the percentage of vertices incorrectly matched, and the edge matching error as the percentage of incorrect edges $\frac{1}{2\|A\|_F^2}\|A - \hat{P}^\top A\hat{P}\|_F^2$. In addition, some methods provide an estimate $\hat{W}$ of the edges of the undirected graphical model $W=P^\top AP$. For Algorithms \ref{alg:Bip-inv} and \ref{alg:Bip-pseudo} this is given by the non-zero pattern of the estimated $\hat{\Theta}_{\hat{P}}$. The accuracy of $\hat{W}$ is measured using the edge false positive and false negative rates, defined as
\begin{align*}
\text{FPR} & = \frac{ \sum_{i>j}\mathbbm{1}(\hat{W}_{ij}=1)\mathbbm{1}({W}_{ij}=0)}{\sum_{i>j} \mathbbm{1}({W}_{ij}=0)} ,\\
\text{FNR} & =\frac{ \sum_{i>j}\mathbbm{1}(\hat{W}_{ij}=0)\mathbbm{1}({W}_{ij}=1)}{\sum_{i>j} \mathbbm{1}({W}_{ij}=1)}.    
\end{align*}
These errors are measured on 30 different Monte Carlo simulations of each scenario for different values of $m$. Running an instance of the simulations with $n=100$, $m=100$, $|S^\ast|=10$ and $T^\ast = 20$ takes about 4.16 minutes for the penalized inverse covariance estimation method (Algorithm~\ref{alg:Bip-inv}), and 13.88 minutes for the pseudolikelihood method (Algorithm~\ref{alg:Bip-pseudo}) on a 2.9 GHz computer. The computational time can be modified by changing the resolution of the grid of tuning parameters via $S^\ast$ or the maximum number of iterations $T^\ast$ at the expense of changes in the accuracy.

The average matching and estimation errors across all the Monte Carlo simulations are shown in Figures~\ref{fig:sim-graph_matching} and \ref{fig:sim-graph_estimation}. In general, methods improve their performance as $m$ grows, both for matching and graphical model estimation, although collapsing methods based on covariance or one-mode projection do not show a noticeable improvement. Our bipartite methods have a superior performance in graph matching with respect to collapsed matching methods. Among those, only the methods based in graphical model estimation show a decent performance for large $m$, when the edge set of the graphical model is accurately estimated as shown in Figure~\ref{fig:sim-graph_estimation}. The results of this figure also suggest by the performance of our methods that the knowledge of $A$, up to some permutation, does reduce the graphical model learning error in the ER graph scenario, where this problem is harder. The FPR is low in our methods because they are limited to discovering at most the same number of edges as $A$, and this graph is sparse in our simulation. For the other methods, the FPR and FNR  depend on the choice of the penalty parameter as usual, and the low values of the FPR are a consequence of the model selection procedure that seems to prefer sparser graphs.

\begin{figure*}[ht!]
    \centering
    \begin{subfigure}[b]{0.8\textwidth}
        \includegraphics[width=\textwidth]{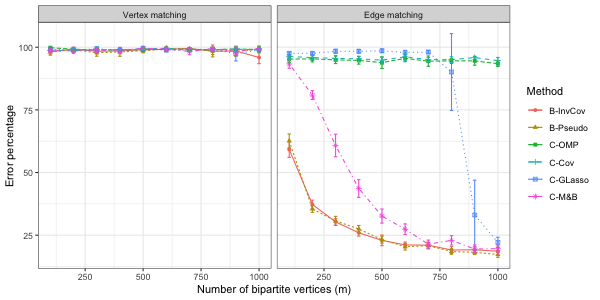}
        \caption{Chain graph.}
    \end{subfigure}
    \begin{subfigure}[b]{0.8\textwidth}
        \includegraphics[width=\textwidth]{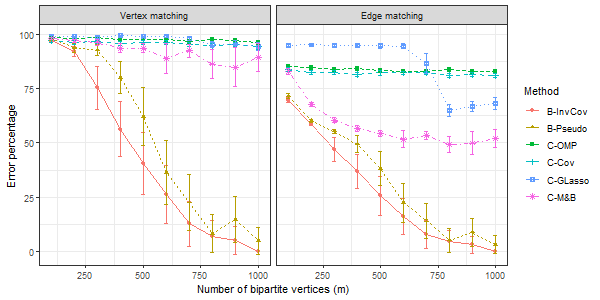}
        \caption{ER graph}
    \end{subfigure}
    \caption{Graph matching percentage error for different methods and unipartite graph models. The points are the average of 30 independent replications, with the bars representing the size of a two standard error interval.
    Our methods show superior performance than matching the collapsed bipartite graph.}
    \label{fig:sim-graph_matching}
\end{figure*}

\begin{figure*}[ht!]
    \centering
    \begin{subfigure}[b]{0.8\textwidth}
    \includegraphics[width=\textwidth]{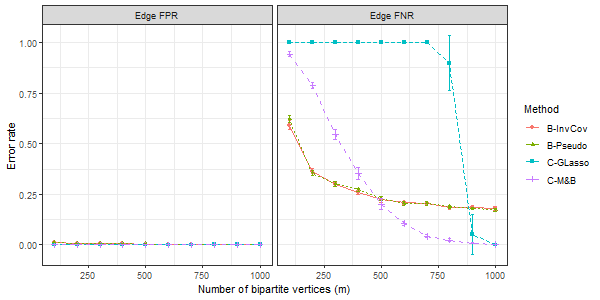}
    \caption{Chain graph.}
    \end{subfigure}
    \begin{subfigure}[b]{0.8\textwidth}
    \includegraphics[width=\textwidth]{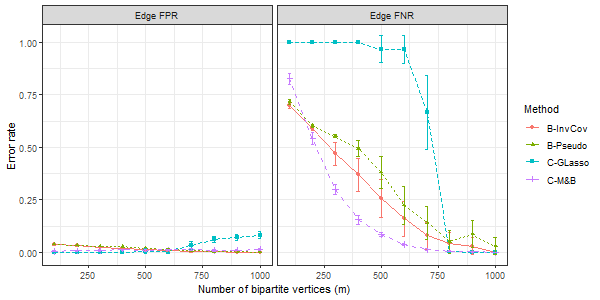}
    \caption{ER graph.}
    \end{subfigure}
    \caption{Error in estimating the graphical model of  the bipartite graph. The points are the average of 30 independent replications, with the bars representing the size of a two-standard error interval. All methods show a good control of false positives, but for small sample sizes our bipartite matching algorithms present better edge discovery due to the knowledge of $A$.}
    \label{fig:sim-graph_estimation}
\end{figure*}

\section{Evaluation on real networks}

We next explore the application of our methodology on two real-data network pairs.
\subsection{Co-authorship network and paper-authorship networks\label{sec:citation}}
The co-authorship and citation networks between statisticians
\citep{ji2016coauthorship} contain data from scientific papers in some of the most popular journals in statistics. 
We use these data to construct a co-authorship unipartite graph and a bipartite graph between authors and papers cited by them (Figure~\ref{fig:statsnet}), and then evaluate the performance of our approach in matching the authors when they are anonymized across the two graphs.

The original data consists of a unipartite directed graph of the citations between $m'=3248$ papers, and a bipartite graph that indicates  authorship of those papers by their corresponding $n'=3608$ authors. Based on these data, we constructed two graphs as follows:
\begin{itemize}
    \item\textbf{Co-authorship graph.} For every pair of authors in the list, two authors are connected if they have published a paper together, resulting in a matrix $A'\in\{0,1\}^{n'\times n'}$.
    \item\textbf{Author-citation network.} A matrix $B'\in\{0,1\}^{n'\times m'}$ encodes a bipartite network between authors and papers, in which each author is connected to the papers that the author has cited.
\end{itemize}
To select a substantial subset of vertices with enough information for performing graph matching, vertices with low or high degree were removed as follows. First, the authors that have cited at least 10 papers on the list were selected. Second, in the resulting subgraph of $A'$, authors that have degrees larger than 1 and smaller than 10 were chosen. Removing the vertices with larger degree step is performed to improve  numerical stability, because having too many unconstrained variables in Equation~\eqref{eq:GMP-uni-bip} can make the solution non-unique; alternatively, a small ridge or lasso penalty can be added to achieve the same goal. Next, the largest connected component of the resulting subgraph of $A'$ was chosen. This step is performed to improve the performance of all the methods, since small components are hard or even impossible to identify and match correctly, but in principle this step can be avoided. Finally, vertices for which rows in $B'$ are co-linear were discarded, again for numerical stability reasons.  After these steps, the resulting graphs $A$ and $B$ have $n=161$ shared vertices that represent the number of authors, and $m=1109$ vertices representing the papers.

The performance of different algorithms is compared in a similar way as in Section~\ref{sec:simulations}.
We include seeded vertices (on the $x$-axis, we have the fraction of randomly chosen seeds), 
and we perform seeded graph matching, measuring the vertex and edge matching errors in the subgraph induced by the unseeded vertices. 
For each fraction of seeds considered, the process of selecting random vertices as seeds is repeated 10 times and results are averaged over these 10 iterates; results are summarized in Figure~\ref{fig:coauth}.
Most methods perform similarly well in terms of vertex matching, and overall the performance improves as more seeds are present, which is expected. 
While exact vertex matching is hard due to the presence of low degree vertices, our methods show superior performance in terms of edge matching, especially when the number of seeds is not very large. 
Note that the C-M$\&$B method of \citep{Meinshausen2006} 
usually results in a very sparse estimated graph due to the difficulty of parameter selection, which explains the poor performance.

\begin{figure*}[ht!]
    \centering
	\includegraphics[width=0.8\textwidth]{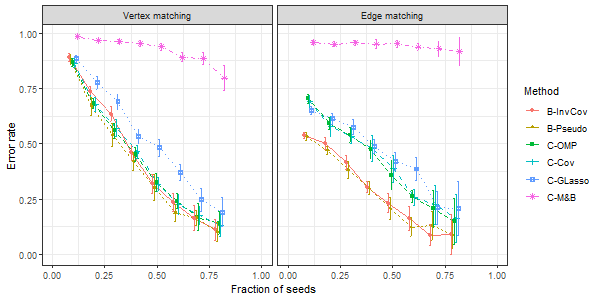}
	\label{fig:stats_match_err}
	\caption{Graph matching error in the co-authorship graph \citep{ji2016coauthorship} as a function of the fraction of seeds. Exact vertex matching is hard due to the small degree of the vertices, but our algorithms show a superior performance in edge matching.}
	\label{fig:coauth}
\end{figure*}

\subsection{MRI data}
Functional activation patterns in the brain have been linked to the structural connectivity,  which constraints  co-activation patterns in different brain regions \citep{abdelnour2014network,meier2016mapping,ng2012novel}. 
Assuming the existence of such relation, we use our matching algorithm to identify the mapping between brain regions in the structural connectivity and the functional activation time series data. 

The BNU1 data \citep{Zuo2014} contains  magnetic resonance imaging (MRI) brain data from 57 healthy subjects that were scanned twice within a period of 6 weeks. A post-processed version of these data was obtained from \url{https://neurodata.io/mri/}, computed with NeuroData's MR to graphs package (ndmg) \citep{Kiar2018}.
Each  scan session (corresponding to one of the two sessions from a particular subject, with 106 scan sessions available in total) produced a unipartite brain network derived from diffusion MRI (dMRI) data, 
and a data matrix containing functional MRI (fMRI)  measurements of blood oxygenation level-dependent (BOLD) signals  on $m=200$ time steps. For both data modalities, we employ the DS00278 parcellation, composed of $277$ brain regions, but in the experiments we focus on the subset of $n=253$ regions that are present in both dMRI and fMRI data. Each dMRI network is encoded into a $253\times 253$ binary matrix representing the existence of nerve tracts between brain regions, and a  data matrix of size $253\times 200$ represents the centered and standardized fMRI BOLD measurements of a particular scan session.  Figures~\ref{fig:dmri} and \ref{fig:fmri-net} show one of these pairs of data.

\begin{figure*}[ht!]
\centering
    \begin{subfigure}[t]{0.4\textwidth}
    \centering
        \includegraphics[width=\textwidth]{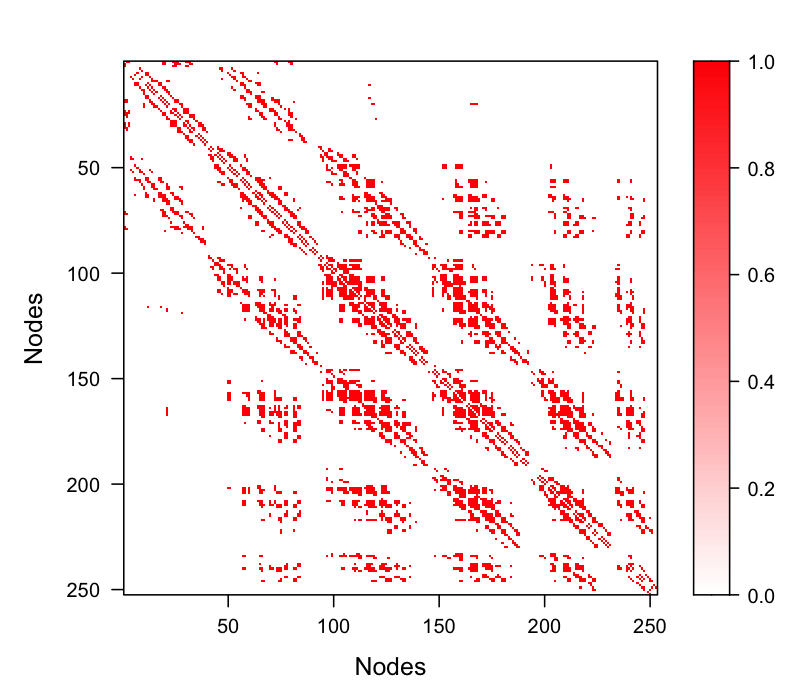}
        \label{fig:subfig-dMRI-net}
        \caption{Diffusion MRI brain network.}\label{fig:dmri}
    \end{subfigure}
    \begin{subfigure}[t]{0.58\textwidth}
    \centering
        \includegraphics[width=\textwidth]{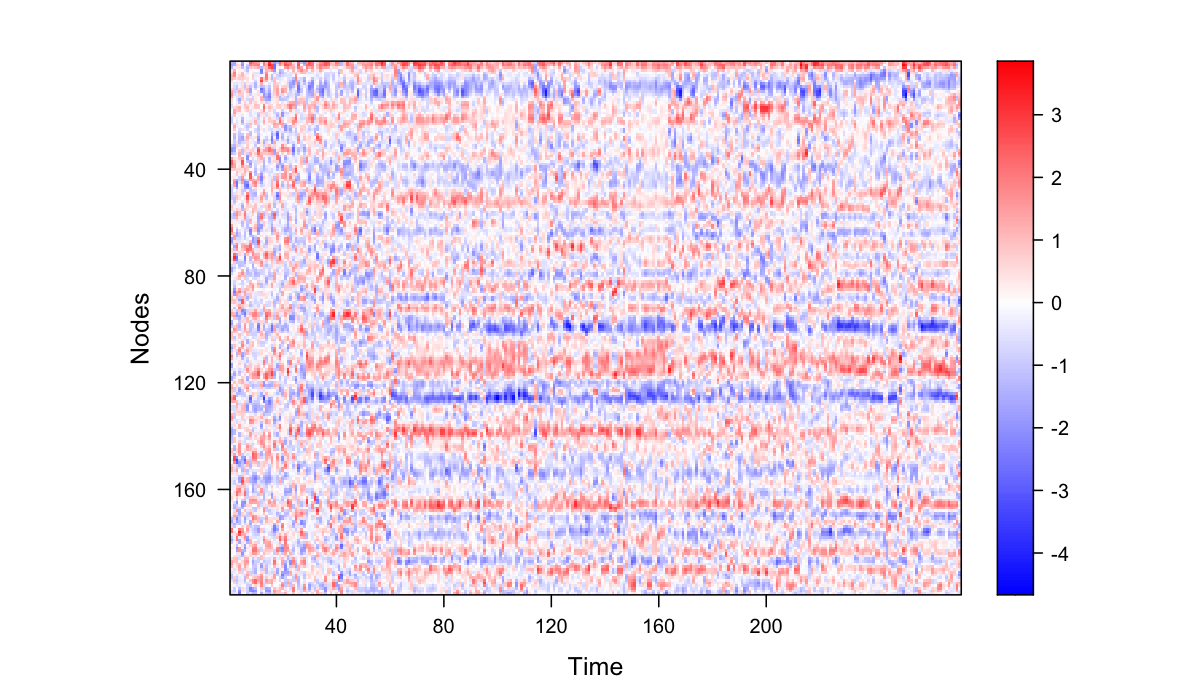}
        \caption{Functional MRI measurements.\label{fig:fmri-net}}
    \end{subfigure}
    \begin{subfigure}[c]{0.48\textwidth}
    \centering
        \includegraphics[width=0.8\textwidth]{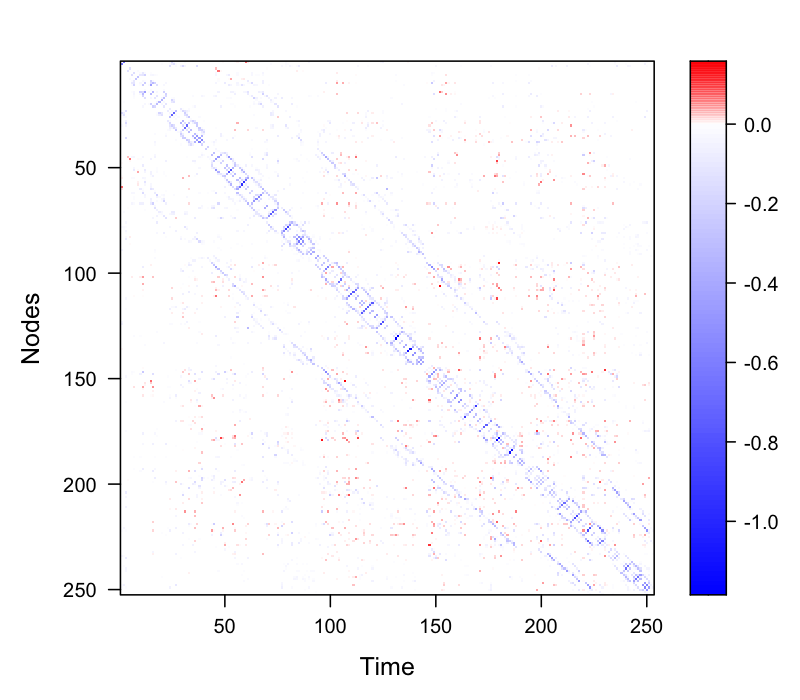}
        \caption{Sparse estimate of the fMRI inverse covariance matrix based on the graphical lasso.\label{fig:fmri-glasso}}
    \end{subfigure}
    \begin{subfigure}[c]{0.48\textwidth}
    \centering
        \includegraphics[width=0.8\textwidth]{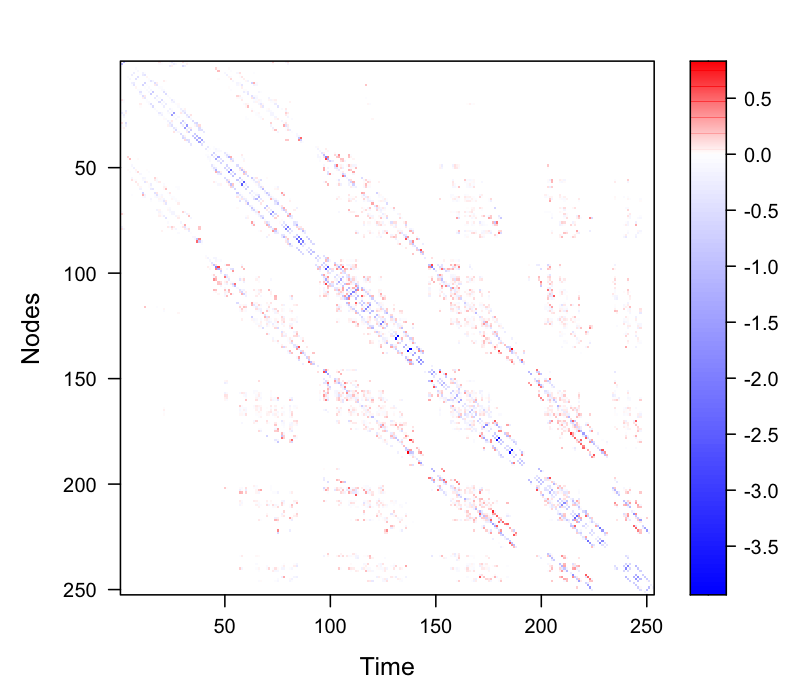}
        \caption{Inverse covariance matrix estimate constrained to the non-zeros of the dMRI network.}\label{fig:fmri-match}
    \end{subfigure}
	\label{fig:mri-nets}
	\caption{The top panels show the structural brain network (left) and the functional MRI time measurements after standardization (right) of one of the scan sessions from the BNU1 data. The bottom panels show estimates of the inverse covariance matrix between brain regions obtained from the whole set of fMRI data measurements. }\label{fig:dmrifmri}
\end{figure*}

The graphical lasso is often used to infer the functional connectivity between  brain regions \citep{ng2012novel,narayan2015sample} despite the dependency across time in the fMRI measurements. 
We use this method to obtain a sparse  estimate of the inverse covariance matrix of the fMRI data taking  all the 106 scan sessions simultaneously to improve the estimation (resulting in a matrix with 21200 time measurements in total). The matrix obtained with the \texttt{huge} package is shown in Figure~\ref{fig:fmri-glasso}, with the rows and columns of the matrix ordered in the same way as the dMRI graph. Many of the entries with the strongest signals in the graphical lasso estimate  correspond with edges in the adjacency matrix of the dMRI network (Figure~\ref{fig:dmri}). On the other hand, some of the  small weights in the graphical lasso estimate corresponding to zeros in the dMRI graph might not be significant, and thus removing them might be appropriate. Indeed, this shared structure across the data modalities has been exploited to infer functional connectivity in previous work (see for example \citep{ng2012novel}), and our methodology uses this correspondence to match the vertices. Figure~\ref{fig:fmri-match} shows the constrained maximum likelihood estimate of the fMRI inverse covariance matrix subject to the entries enforced to be zero for non-edges in the structural dMRI network. This matrix corresponds to the solution of our graph matching optimization problem~\eqref{eq:GMP-uni-bip} under a Gaussian graphical model when the vertices are correctly matched. In comparison with the graphical lasso solution, many edges are removed, resulting in inflated weights on the remaining edges.

We now investigate the performance of different methods in aligning the vertices across the fMRI and dMRI data when the labels of some of the vertices are unknown. This task is often required in order to analyze different data modalities jointly
\citep{saad2009new}. For this goal, we treat the fMRI data matrix as a weighted bipartite network between brain regions and time points, and use some of the methods discussed in the previous sections to recover the correct vertex alignment. We also study the effect of the number of observations in the fMRI data by concatenating the data from multiple scans. 

\begin{figure*}[ht!]
    \centering
	\includegraphics[width=0.9\textwidth]{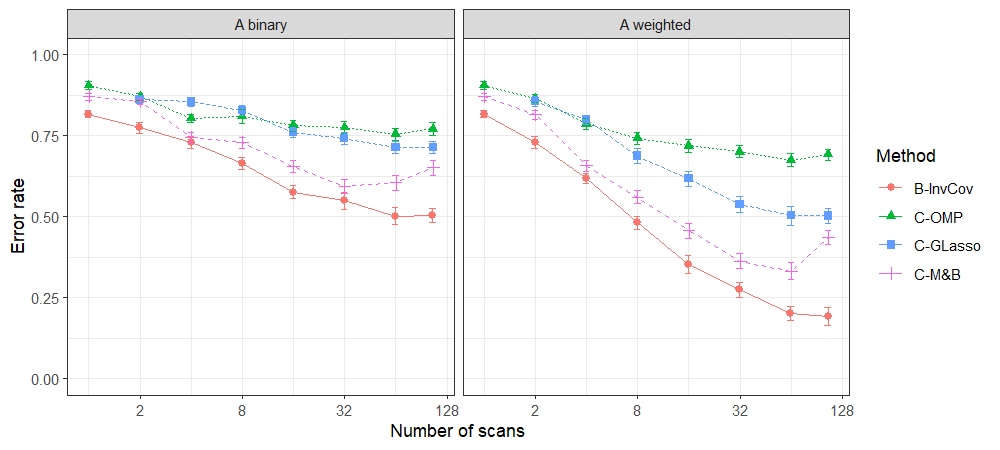}
	\caption{Graph matching error between the dMRI structural brain graphs (left: a single dMRI graph; right: average of a given number of dMRI graphs) and the fMRI time series data as a function of the number of scans used. In all cases, $n/2$ vertices were randomly selected as seeds, and the average of 30 different seed selections (with 2 standard error bars) is reported.}
    \label{fig:MRI-gm-error}
\end{figure*}

Figure~\ref{fig:MRI-gm-error} shows the average fraction of vertices that were incorrectly matched by different algorithms as a function of the number of scan sessions used. The left panel shows the result of matching a single binary dMRI network of one of the samples in the data, and the concatenated fMRI measurements for a given number of scan sessions. In addition, we also study the performance of a method based on a weighted unipartite network constructed from averaging the dMRI graphs of a given number of samples (right panel); we note that although our methods were designed for binary networks, Equations~\eqref{eq:step-graphicallasso} and~\eqref{eq:step-faq} can also handle weighted edges in $[0,1]$. As we did in Section~\ref{sec:citation}, we randomly select a fraction of the vertices as seeds for the matching algorithms (50\% in our experiments), but we observe that the results are qualitatively similar for different sizes of the seeds set. The results show that our graph matching method based on sparse inverse covariance estimation (Algorithm~\ref{alg:Bip-inv}) has the best average performance in all cases, followed by the unipartite matching using the estimated fMRI graph based on \citep{Meinshausen2006} (C-M\&B). These  results thus support the idea that the assumption of a shared edge set across the two data modalities is useful from the graph matching perspective.
Most of the methods significantly improve their performance as the number of scans increase, and a large sample size is required to achieve an accurate solution. However, this is not the case for the method based on  \citep{Meinshausen2006}, which uses penalized node-wise regression to estimate the edges. A possible explanation is that the density of the network estimated by C-M\&B increases with the number of observations, and thus the level of sparsity might not be appropriate for graph matching purposes. Finally, we also observe that the methods that use a weighted dMRI network from multiple measurements generally perform better, probably due to a better estimation of the structural connectivity, which highlights the need of large amounts of MRI data to obtain accurate matching.

\section{Conclusion}
Network data is ubiquitous.
While multiple methodologies have been developed to analyze this type of data, most of the studies have focused on the unipartite graph setting. 
More complex data structures often cannot be fully represented with unipartite graphs, and in practice, information is commonly discarded or collapsed in order to adapt the data to the existing methodologies. 
In this paper we have shown that from the graph matching perspective, collapsing the graph into a unipartite network is not always the right approach and can often fail. However, addressing the problem with methods tailored for the specific data yield significant gains in accuracy. 
More complex data designs, such as vertex or edge attributes, missing data, multilayer networks, among others, present new challenges in network analysis, and we are working to adapt our approach to these more complex settings.

\section*{Acknowledgements}
This material is based on research sponsored by the Air Force Research
Laboratory and DARPA under agreement number FA8750-18-2-0035 and FA8750-20-2-1001. The
U.S. Government is authorized to reproduce and distribute reprints for Governmental purposes notwithstanding any copyright notation thereon. The
views and conclusions contained herein are those of the authors and should
not be interpreted as necessarily representing the official policies or endorsements, either expressed or implied, of the Air Force Research Laboratory and
DARPA or the U.S. Government. Vince Lyzinski also gratefully acknowledge
the support of NIH grant BRAIN U01-NS108637. The authors thank Ross Lawrence and Joshua Vogelstein for their help in obtaining the MRI data.
\bibliographystyle{apa}
\bibliography{biblio2}

We start with some preliminaries and introduce the following symbols to simplify notation. For a matrix $R\in\real^{n\times n}$ and a set of indexes $\mathcal{I}\subset [n]\times[n]$, define $\|R\|_{F, \mathcal{I}}$ as the Frobenius norm of the matrix $R$ calculated only on the entries with indexes in $\mathcal{I}$, that is,
$$\|R\|_{F, \mathcal{I}}^2 := \sum_{(i,j)\in\mathcal{I}}R_{ij}^2.$$
Similarly, for a pair of matrices $R,S\in\real^{n\times n}$, we define the inner product of the entries with indexes in $\mathcal{I}$ as
$$\left\langle R, S\right\rangle_{\mathcal{I}} := \sum_{(i,j)\in\mathcal{I}} R_{ij}S_{ij}.$$
Given a permutation matrix $P\in\Pi_n$, we also define $\mathcal{I}_P$ as the set of indexes for which the solution for $\Theta$ to the constrained optimization graph matching problem with fixed $P$ can take non-zero values, that is,
$$\mathcal{I}_P:= \{(i,j)\in[n]\times [n]: i=j \text{ or } (P^\top AP)_{ij}=1)\}. $$
We also define $\mathbb{M}_P\subset\real^{n\times n}$ as the set of positive definite matrices that only take non-zero values in the entries corresponding to the set defined above, that is,
$$\mathbb{M}_P :=\{\Theta\in\real^{n\times n}:\Theta\succ 0\text{ and }\Theta_{ij}=0\text{ if }(i,j)\notin \mathcal{I}_P\}.$$
Observe that if $R\in\mathbb{M}_P$, then $\left\langle R, S\right\rangle = \left\langle R, S\right\rangle_{\mathcal{I}_P}$.

 For real matrices $R,S\in\real^{n\times n}$, we write $\ve{R}\in\real^{n^2}$ as the vectorized adjacency matrix, and $R\otimes S\in\real^{n^2\times n^2}$ as the Kronecker product of $R$ and $S$.
 
We also recall some properties of the log-determinant function. For a pair of positive definite matrices $\Theta_1, \Theta_2\in\real^{n\times n}$, a Taylor expansion of the function for $\Theta_2$ around $\Theta_1$ is given by
    \begin{align}
        \log|\Theta_2| & = \log |\Theta_1|  + \left\langle\Delta, \Theta_1^{-1}\right\rangle - \frac{1}{2}\ve{\Delta}^\top(\Theta_1 + v\Delta)^{-1}\otimes(\Theta_1 + v\Delta)^{-1}\ve{\Delta}, \label{eq:taylor-logdet}
    \end{align}
   for some $v\in[0,1]$, and with $\Delta = \Theta_2 - \Theta_1$. It is also useful to bound the last term in the previous equation. Using the fact that $\gamma_{\max}(R\otimes R) = \gamma_{\max}^2(R)$ and $\gamma_{\min}(R\otimes R) = \gamma_{\min}^2(R)$ for a positive semidefinite matrix $R$, we have
\begin{align}
    \ve{\Delta}^\top(\Theta + v\Delta)^{-1}\otimes(\Theta + v\Delta)^{-1}\ve{\Delta} & \leq \frac{\|\Delta\|^2_F}{\gamma_{\min}^2(\Theta + v\Delta)} \label{eq:kroneckerupperbound}\\
    \ve{\Delta}^\top(\Theta + v\Delta)^{-1}\otimes(\Theta + v\Delta)^{-1}\ve{\Delta} & \geq \frac{\|\Delta\|^2_F}{\gamma_{\max}^2(\Theta + v\Delta)} \label{eq:kronecker-lowerbound}
\end{align}

\section{Proof of Theorem 1, part 1.}
\begin{proof}[Proof of Theorem~\ref{thm:main}, part 1.]
First, assume that 
 $\beta=0$ in the model \eqref{eq:GLM-full}. Then, the empirical log-likelihood for a value of $m$ is given by
 \begin{equation}
    \hat{\ell}_{m}(\Theta) := \frac{1}{m}\sum_{k=1}^m\left(\sum_{i\neq j}\Theta_{ij}B_{ik}B_{jk} -2\sum_{i=1}^n\Theta_{ii}C(B_{ik})\right) -\log(Z(\Theta)),\label{eq:empirical-loglik}
 \end{equation}
with $Z(\Theta)$ the partition function. Note that the log-likelihood function here is concave because the model belongs to the exponential family. 

Define $\hat{\Theta}_{m} := \argmax_{\Theta}\hat{\ell}(\Theta)$, and 
$\hat{\Theta}_{P,m} := \argmax_{\Theta}\hat{\ell}(\Theta)$ subject to $\Theta_{ij}(1-(P^\top AP)_{ij})=0$. To show that log-likelihood function is well behaved   in order to show consistency of $\hat{\Theta}_{P^\ast,m}$ and $\hat{\Theta}_{m}$ (see for example 5.2.2 of \cite{bickel2015mathematical}), we observe that the normalizing constant in Equation~\eqref{eq:GLM-full} is finite for any $\Theta\in\real^{n\times n}$ for both the Ising model and the multivariate Gaussian distribution, so the natural parameter space is open.   
In addition, the restriction in the optimization problem enforces the structure of $\hat{\Theta}_{P^\ast, m}$, to have zero entries in the same places as $\Theta^\ast$, so the effective parameters are the non-zero entries of $\Theta^\ast$.
Therefore, the estimators $\hat{\Theta}_{P^\ast,m}$ and $\hat{\Theta}_{m}$ are consistent for $\Theta^\ast$. By the weak law of large numbers, $\frac{1}{m}\sum_{k=1}^m B_{ik}B_{jk}\rightarrow \e[B_{i1}B_{j1}]$, and hence, by
continuity of the function $Z(\Theta)$, $\hat{\ell}_m(\hat{\Theta}_m)\overset{\mathbb{P}}{\rightarrow}\ell^\ast(\Theta^\ast)$ and $\hat{\ell}_m(\hat{\Theta}_{P^\ast,m})\overset{\mathbb{P}}{\rightarrow}\ell^\ast(\Theta^\ast)$. 

Now, suppose that there exists sequences  $\{m_k\}_{k=1}^\infty\subset\mathbb{N}$ and  $\{P_{m_k}\}_{k=1}^\infty$ such that for each $m_k$,  the solution of \eqref{eq:GMP-uni-bip} for $\hat{\ell}_{m_k}$ is $P_{m_k}$. Because $\Pi_n$ is finite, there exists a permutation $\tilde{P}$ for which $\tilde{P}=P_{m_k}$ for infinitely many values of $k$. This permutation  should satisfy 
$$\hat\ell_{m_k}(\hat{\Theta}_{m_k}) \geq \hat\ell_{m_k}(\hat{\Theta}_{\tilde{P}, m_k}) \geq \hat\ell_{m_k}(\hat{\Theta}_{P^\ast, m_k}).$$
The left inequality is a consequence of the optimality of $\hat{\Theta}_{m_k}$ in the unrestricted problem, while the right inequality is the assumption in the definition of $\tilde{P}$.
Hence, $\hat\ell_{m_k}(\hat{\Theta}_{\tilde{P}, m_k})\overset{\mathbb{P}}{\rightarrow}{\ell^\ast(\Theta^\ast)}$. The sequence of functions $\{\hat\ell_m\}$ and $\ell^\ast$ are concave (Theorem 1.6.3 of \cite{bickel2015mathematical}, in addition to the linearity w.r.t. $\Theta$ in Equation~\eqref{eq:empirical-loglik}). This
implies that for any compact set $\mathcal{K}\subset \real^{n\times n}$, $\hat{\ell}_m$ converges uniformly in probability to $\ell^\ast$ in $\mathcal{K}$ (Theorem II.1 of \cite{andersen1982cox}). Taking $\mathcal{K}_{\Theta^\ast}$ such that $\Theta^\ast$ is in the interior of $\mathcal{K}_{\Theta^\ast}$, condition (i) implies that $\lim\mathbb{P}(\hat{\Theta}_{\tilde{P},m}\in\text{int}(\mathcal{K}_{\Theta^\ast}))=1$, and therefore the uniform convergence implies that $\hat{\Theta}_{\tilde{P},m}\overset{\mathbb{P}}{\rightarrow}\Theta^\ast$. 
Since $\Theta^\ast_{ij}(1(\tilde{P}^\top A\tilde{P})_{ij})=0$, condition (ii) guarantees that $\tilde{P}^\top A\tilde{P} = (P^\ast)^\top AP^\ast$. This is valid for any $P$ that appears infinitely many times on $\{P_{m_k}\}$, which implies the result. 

The last statement of the theorem trivially holds because if $A$ does not have non-identity automorphisms then $\tilde{P}=P^\ast$ is the only solution for $P$ in $\tilde{P}^\top A\tilde{P} = (P^\ast)^\top AP^\ast$. The proof for the case in which $\beta\neq 0$ follows mutatis mutandis.
\end{proof}

\section{Proof of Theorem 1, part 2}

The next lemma follows from an application of Lemma A.3 of \cite{bickel2008regularized} for the concentration of the entries of  the sample covariance matrix. In this section, we use $\Sigma:=(\Theta^\ast)^{-1}$ to denote the covariance matrix of the Gaussian graphical model for $B^{(m)}$, and $\widehat\Sigma = B^{(m)}(B^{(m)})^\top$ is the sample covariance matrix.

\begin{lemma} \label{lemmma:covariance_frobenius}
Suppose that $\gamma_{\max}(\Sigma)\leq \kappa_1$ for some $\kappa_1>0$. Then, there exist positive constants  $c_1, c_2$ and $\delta$  that only depend on $\kappa_1$  such that 


\begin{equation*}
    \mathbb{P}\left(\max_{P\in\Pi_n}\|\widehat\Sigma - \Sigma\|_{F,\mathcal{I}_P}^2 >  t \right) \leq c_1\exp\left(2\log n - c_2\frac{mt}{\|A\|_F^2 + n}\right).
\end{equation*}
for any  $t\leq \delta (\|A\|_F^2 + n)$.
\end{lemma}

\begin{proof}
Observe that, according to \cite{bickel2008regularized},
\begin{align*}
    \mathbb{P}\left(\max_{i,j} (\widehat\Sigma_{ij} - \Sigma_{ij})^2 > t\right) \leq & \mathbb{P}\left(\bigcup_{i,j}(\widehat\Sigma_{ij} - \Sigma_{ij})^2 > t\right)\\
    \leq & \sum_{i=1}^n\sum_{j=1}^n
     \mathbb{P}\left((\widehat\Sigma_{ij} - \Sigma_{ij})^2 > t\right)\\
     \leq & c_1n^2\exp\left( -c_2mt\right),
\end{align*}
for any $t\leq \delta$, with $c_1, c_2$ and $\delta$ some positive constants that only depend on $\kappa_1$. Note that $|\mathcal{I}_P|=\|A\|_F^2+n$. Hence,
\begin{align*}
    \mathbb{P}\left(\max_{P\in\Pi_n}\|\widehat\Sigma - \Sigma\|_{F,\mathcal{I}_P}^2 > t\right)
    \leq & \mathbb{P}\left(\max_{P\in\Pi_n}|\mathcal{I}_P|\max_{i,j}(\widehat\Sigma_{ij} - \Sigma_{ij})^2 > t\right)\\
    = & \mathbb{P}\left(\max_{i,j}(\widehat\Sigma_{ij} - \Sigma_{ij})^2 >\frac{t}{\|A\|_F^2+n} \right)\\
    \leq & c_1\exp\left(2\log n - c_2\frac{mt}{\|A\|_F^2 + n}\right),
\end{align*}
for any $t\leq \delta (\|A\|_F^2+n)$.

\end{proof}

The next lemma studies the concentration of the estimator of the graphical model parameters restricted to a  given permutation $P\in\Pi_n$, that is
\begin{equation*}
    \widehat\Theta_P = \argmax_{\Theta} \hat{\ell}(\Theta) \text{ subject to }\Theta_{ij}(1-(P^\top A P)_{ij}) = 0.
\end{equation*}
To prove this result, we follow a similar argument to \cite{Rothman2008}.

\begin{lemma} \label{lemma:theta-error} Suppose that $\gamma_{\max}(\Sigma)\leq \kappa_1$ and $\max_{P\in\Pi_n}\gamma_{\max}(\widetilde\Theta_P) \leq \kappa_2$ for some positive $\kappa_1, 
\kappa_P$. Then, there exists positive constants $c_1, c_2$ and $\delta$ that only depend on $\kappa_1$ such that
\begin{equation*}
\mathbb{P}\left(\max_{P\in\Pi_n}\|\widehat\Theta_P - \widetilde\Theta_P\|_F^2 > t\right) \leq  c_1\exp\left(2\log n - c_2\frac{mt}{8\kappa_2(\|A\|_F^2 + n)}\right)    
\end{equation*}
for any $t\leq \min\left\{ \frac{1}{8\kappa_2}, 8\kappa_2\delta (\|A\|_F^2 +n) \right\}$.
\end{lemma}
\begin{proof}
We show that for any $\Theta\in\mathbb{M}_P$ and some positive values $t_1$ and $t_2$ such that
$$t_1 < \|\Theta - \widetilde\Theta_P\|_F < t_2,$$
the  loglikelihood function satisfies $\hat{\ell}(\Theta) < \hat{\ell}(\widetilde\Theta_P) \leq \hat{\ell}(\widehat\Theta_P)$. Hence,  the concavity of $\hat{\ell}$ implies that the optimal solution $\widehat\Theta_P$ should satisfy $\|\widehat\Theta_P - \widetilde\Theta_P\|_F \leq t_1$. 

For a matrix $\Theta \in\mathbb{M}_P$, write $\Delta = \Theta - \widetilde\Theta_P$. The Taylor expansion of the log-determinant function in Equation~\eqref{eq:taylor-logdet} and the inequalities in Equation~\eqref{eq:kronecker-lowerbound}
imply that there exists a constant $v\in[0,1]$ such that the difference between the  values of the loglikelihood function at $\widetilde\Theta_P$ and $\Theta$ can be expressed as
\begin{align}
    \hat{\ell}(\widetilde\Theta_P) - \hat\ell(\Theta)  = & \left\langle \Delta, \widehat\Sigma\right\rangle - \left(\log|\Theta| - \log |\widetilde\Theta_P|\right)\nonumber \\
    = & \left\langle \Delta, \widehat\Sigma -\widetilde\Theta_P^{-1}\right\rangle_{\mathcal{I}_P} +\frac{1}{2}\ve{\Delta}^\top(\widetilde\Theta_P + v\Delta)^{-1}\otimes(\widetilde\Theta_P + v\Delta)^{-1}\ve{\Delta} \nonumber\\
    \geq & -\|\Delta\|_F\|\widehat\Sigma - \widetilde\Theta_P^{-1}\|_{F, \mathcal{I}_P} +\frac{\|\Delta\|_F^2}{2\gamma_{\max}^2(\widetilde\Theta_P + v\Delta)} \nonumber\\
    \geq & \|\Delta\|_F\left( \frac{\|\Delta\|_F}{2(\gamma_{\max}(\widetilde\Theta_P) + \|\Delta\|_F)^2} -\|\widehat\Sigma - \widetilde\Theta_P^{-1}\|_{F, \mathcal{I}_P}\right).\label{eq:lemma-thetaalt-positive}
\end{align}
Notice that the last term in the previous equation can also be written as
$$\|\widehat\Sigma - \widetilde\Theta_P^{-1}\|_{F, \mathcal{I}_P} = \|\widehat\Sigma - \Sigma\|_{F, \mathcal{I}_P}.$$
To check this last equality, observe that the optimality conditions for $\widetilde\Theta_P$  imply 
$$\left[\nabla\hat\ell(\widetilde\Theta_P)\right]_{ij} = -\Sigma_{ij} + \left[\widetilde\Theta_P^{-1}\right]_{ij}=0,\quad\quad\text{for all }(i,j)\in\mathcal{I}_P.$$
Hence, a sufficient condition for Equation~\ref{eq:lemma-thetaalt-positive} to be strictly positive is that $\|\Delta\|_F < \gamma_{\max}(\widetilde\Theta_P)$ and
\begin{equation}
    8\gamma^2_{\max}(\widetilde\Theta_P)\|\widehat\Sigma - \Sigma\|_{F, \mathcal{I}_P}< \|\Delta\|_F , \label{eq:proof-thetaconc-positivecondition}
\end{equation}
and these inequalities have a solution if
\begin{equation*}
    \|\widehat\Sigma - \Sigma \|_{F, \mathcal{I}_P} < \frac{1}{8\gamma_{\max}(\widetilde\Theta_P)}.
\end{equation*}
Thus, conditioned on the event in the equation above, by the arguments stated at the beginning of the proof, Equation~\eqref{eq:proof-thetaconc-positivecondition} implies that
\begin{equation*}
    \|\widehat\Theta_P - \widetilde\Theta_P \|_F \leq  8\gamma^2_{\max}(\widetilde\Theta_P)\|\widehat\Sigma - \Sigma\|_{F, \mathcal{I}_P},
\end{equation*}
and hence, if
$\max_{P\in\Pi_n}\|\widehat\Theta_P - \widetilde\Theta_P\|_F > t$
then $\max_{P\in\Pi_n}8\gamma_{\max}^2(\widetilde\Theta_P)\|\widehat\Sigma - \Sigma\|_{F,\mathcal{I}_P}>t$ for any $t\leq \frac{1}{8\max_{P\in\Pi_n}\gamma_{\max}(\widetilde\Theta_P)}$. Therefore, using Lemma~\ref{lemmma:covariance_frobenius},
\begin{align*}
    \mathbb{P}\left(\max_{P\in \Pi_n}\|\widehat\Theta_P - \widetilde\Theta_P\|_F \geq t\right) & \leq \mathbb{P}\left(\max_{P\in \Pi_n} \|\widehat\Sigma - \Sigma\|_{F,\mathcal{I}_P} \geq \frac{t}{8\max_{P\in\Pi_n}\gamma_{\max}(\widetilde\Theta_P)}\right) \\
    & \leq c_2\exp\left(2\log n - c_1\frac{mt}{8(\|A\|_F^2 + n)\max_{P\in\Pi_n}\gamma_{\max}(\widetilde\Theta_P)}\right),
\end{align*}
for any $t\leq \min\left\{ \frac{1}{8\kappa_2}, 8\kappa_2\delta (\|A\|_F^2 +n) \right\}$.





\end{proof}

Using the previous lemmas, we are now ready to prove the main result.

\begin{proof}[Proof of Theorem \ref{thm:main}, part 2.]
To prove that the correct permutation $P^\ast$ is exactly recovered, we show that with high probability,
$$\hat{\ell}(\widehat{\Theta}_{P^\ast}) > \hat{\ell}(\widehat{\Theta}_{P})\quad\quad\quad\text{for all }P\in\Pi_n\setminus\{P^\ast\}.$$

Define $\Delta_{P} := \widehat\Theta_{P}-\widetilde\Theta_{P}$,  $\Delta_{P^\ast} := \widetilde\Theta_{P^\ast}-\widehat\Theta_{P^\ast}$ and $\widetilde\Delta_P := \widetilde\Theta_P - \widetilde\Theta_{P^\ast}$. Then, using the Taylor expansion~\eqref{eq:taylor-logdet}, there exist $v_1,v_2\in[0,1]$ such that the difference of the loglikelihood values at the solutions $\widehat\Theta_{P^\ast}$ and $\widehat\Theta_P$
can be bounded from below as follows:
\begin{align}
\hat{\ell}(\widehat{\Theta}_{P^\ast}) - \hat{\ell}(\widehat{\Theta}_{P})  = & \left\langle\widehat\Theta_{P} - \widehat\Theta_{P^\ast}, \widehat\Sigma\right\rangle  -\left[\log|\widehat\Theta_{P}| - \log|\widehat\Theta_{P^\ast}|  \right]\nonumber\\
 = & \left\langle \widetilde\Theta_{P}-\widetilde\Theta_{P^\ast},\Sigma\right\rangle + \left\langle \widehat\Theta_{P}-\widetilde\Theta_{P},\Sigma\right\rangle + \left\langle \widetilde\Theta_{P^\ast}-\widehat\Theta_{P^\ast},\Sigma\right\rangle + \left\langle \widetilde\Delta_P,\widehat\Sigma-\Sigma\right\rangle +\nonumber\\
& \left\langle \widehat\Theta_{P}-\widetilde\Theta_{P},\widehat\Sigma - \Sigma\right\rangle   - \log|\widehat\Theta_P| + \left\langle \widetilde\Theta_{P^\ast}-\widehat\Theta_{P^\ast},\widehat\Sigma-\Sigma\right\rangle  + \log|\widehat\Theta_{P^\ast}| \nonumber\\
 = & \left\{ {\ell}^\ast(\widetilde{\Theta}_{P^\ast}) -{\ell}^\ast(\widetilde{\Theta}_{P})\right\} + \left\langle \Delta_{P},\widehat\Sigma - \Sigma\right\rangle + \left\langle \Delta_{P^\ast},\widehat\Sigma-\Sigma\right\rangle  + \left\langle \widetilde\Theta_{P}-\widetilde\Theta_{P^\ast},\widehat\Sigma-\Sigma\right\rangle+ \nonumber\\
 & \left\{ \left\langle \Delta_{P},\Sigma\right\rangle -\left[\log|\widehat\Theta_P| - \log|\widetilde\Theta_P|\right]\right\} +  \left\{\left\langle\Delta_{P^\ast},\Sigma\right\rangle -\left[\log|\widetilde\Theta_{P^\ast}| - \log|\widehat\Theta_{P^\ast}|\right]\right\} \nonumber\\
    = & \left\{ {\ell}^\ast(\widetilde{\Theta}_{P^\ast}) -{\ell}^\ast(\widetilde{\Theta}_{P})\right\} + \left\langle \Delta_{P},\widehat\Sigma - \Sigma\right\rangle
    _{\mathcal{I}_P}+ \left\langle \Delta_{P^\ast},\widehat\Sigma-\Sigma\right\rangle_{\mathcal{I}_{P^\ast}}  + \left\langle \widetilde\Delta_P,\widehat\Sigma-\Sigma\right\rangle_{\mathcal{I}_P\cup\mathcal{I}_{P^\ast}} + \nonumber\\
    &-
      \frac{1}{2}\ve{\Delta_P}^\top(\widetilde\Theta_P + v_1\Delta_P)^{-1}\otimes(\widetilde\Theta_P + v_1\Delta_P)^{-1}\ve{\Delta}\nonumber\\
      & + \frac{1}{2}\ve{\Delta_{P^\ast}}^\top(\widetilde\Theta_{P^\ast} + v_2\Delta_{P^\ast})^{-1}\otimes(\widetilde\Theta_{P^\ast} + v_2\Delta_{P^\ast})^{-1}\ve{\Delta_{P^\ast}}\nonumber\\
    \geq & \left\{\ell^\ast(\widetilde\Theta_{P^\ast}) - \ell^\ast(\widetilde\Theta_{P})\right\}  -\|\Delta_P\|_F\|\widehat\Sigma - \Sigma\|_{F, \mathcal{I}_P}  -\|\Delta_{P^\ast}\|_F\|\widehat\Sigma - \Sigma\|_{F, \mathcal{I}_{P^\ast}}\nonumber\\
      &- \|\widetilde\Delta_P\|_F\|\widehat\Sigma - \Sigma\|_{F, \mathcal{I}_P\cup\mathcal{I}_{P^\ast}} - \frac{\|\Delta_P\|_F^2}{2\gamma_{\min}^2(\widetilde\Theta_P + v_1\Delta_P)}\nonumber\\
      = &  X_P - \widehat{X}_P,\label{eq:proof-thm-lowerbound}
  \end{align}
where
\begin{equation*}
    X_P := \ell^\ast(\widetilde\Theta_{P^\ast}) - \ell^\ast(\widetilde\Theta_{P}),
\end{equation*}
\begin{align*}
    \widehat{X}_P  := & \|\Delta_P\|_F\|\widehat\Sigma - \Sigma\|_{F, \mathcal{I}_P}  +\|\Delta_{P^\ast}\|_F\|\widehat\Sigma - \Sigma\|_{F, \mathcal{I}_{P^\ast}} +\\
    & \|\widetilde\Delta_P\|_F\|\widehat\Sigma - \Sigma\|_{F, \mathcal{I}_P\cup\mathcal{I}_{P^\ast}} + \frac{\|\Delta_P\|_F^2}{2\gamma_{\min}^2(\widetilde\Theta_P + v_1\Delta_P)}.
\end{align*}
Hence, a sufficient condition for exact recovery of $P^\ast$ is that $X_P> \widehat X_P$ for all $P\neq P^\ast$, and thus, the probability that the solution $\widehat P$ is different from the correct permutation $P^\ast$ can be bounded from above as
\begin{align}
    \mathbb{P}\left(\bigcup_{P\neq P^\ast} \left\{\hat{\ell}(\widehat\Theta_P) >\hat{\ell}(\widehat\Theta_{P^\ast})\right\} \right)    \leq  & 
    \mathbb{P}\left(\exists P\neq P^\ast \text{ s.t. }\widehat{X}_P > X_P \right)\nonumber\\
    \leq & 
    \mathbb{P}\left(\max_{P\neq P^\ast}\left\{\widehat{X}_P - X_P\right\} > 0 \right)\nonumber\\
    \leq & \mathbb{P}\left(\max_{P\neq P^\ast}\left\{\|\Delta_P\|_F\|\widehat\Sigma - \Sigma\|_{F, \mathcal{I}_P}\right\} > \frac{t_1}{4}\right) +\nonumber \\
    & \mathbb{P}\left(\left\{\|\Delta_{P^\ast}\|_F\|\widehat\Sigma - \Sigma\|_{F, \mathcal{I}_{P^\ast}}\right\} > \frac{t_1}{4}\right) + \nonumber\\
    &\mathbb{P}\left(\max_{P\neq P^\ast}\left\{\frac{\|\Delta_P\|_F^2}{2\gamma_{\min}^2(\widetilde\Theta_P + v_1\Delta_P)}\right\} > \frac{t_1}{4}\right) + \nonumber \\
    & \mathbb{P}\left(\max_{P\neq P^\ast}\left\{\|\widehat\Sigma - \Sigma\|_{F, \mathcal{I}_P\cup\mathcal{I}_{P^\ast}} - \frac{X_P}{4\|\widetilde\Delta_P\|_F}\right\} > 0\right), \label{eq:proofthm-4termsbound}
\end{align}
where $t_1$ is any positive constant  that satisfies $t_1\leq \min_{P\neq P^\ast} X_P$.

To bound the  terms in Equation~\eqref{eq:proofthm-4termsbound}, we use  Lemmas~\ref{lemmma:covariance_frobenius} and \ref{lemma:theta-error}.
For the first and second terms, observe that
\begin{align}
    \mathbb{P}\left(\max_{P\in\Pi_n}\|\Delta_P\|_F\|\widehat\Sigma - \Sigma\|_{F, \mathcal{I}_P} > \frac{t_1}{4}\right) & \leq
    \mathbb{P}\left(\left\{\max_{P\in\Pi_n}\|\Delta_P\|_F > \frac{\sqrt{t_1}}{2}\right\} \cup \left\{\max_{P\in\Pi_n}\|\widehat\Sigma - \Sigma\|_{F, \mathcal{I}_P} > \frac{\sqrt{t_1}}{2}\right\}\right)\nonumber\\
    &\leq
    \mathbb{P}\left(\max_{P\in\Pi_n}\|\Delta_P\|_F^2 > \frac{t_1}{4}\right) + \mathbb{P}\left(\max_{P\in\Pi_n}\|\widehat\Sigma - \Sigma\|_{F, \mathcal{I}_P}^2 > \frac{t_1}{4}\right)\nonumber\\
    & \leq 2c_1\exp\left(2\log n  - 
    \frac{c_2 mt_1}{\max\{32\kappa_2, 4\}(\|A\|_F^2 + n)} \right), \label{eq:proof-thm-Delta-P-Sigma}
\end{align}
for any $t_1\leq T_1:=4\min\left\{ \frac{1}{8\kappa_2}, 8\kappa_2\delta (\|A\|_F^2 +n), \delta (\|A\|_F^2 +n) \right\}$.
To obtain an upper bound for the third term of Equation~\eqref{eq:proofthm-4termsbound},  notice that
\begin{align*}
    \gamma_{\min}(\widetilde\Theta_P + v_1\Delta_P) & \geq \gamma_{\min}(\widetilde\Theta_P) - v_1\gamma_{\max}(\Delta_P)\\
    & \geq \gamma_{\min}(\widetilde\Theta_P) - \|\Delta_P\|_F.
\end{align*}
Hence, under the event $\|\Delta_P\|_F\leq \tau \gamma_{\min}(\widetilde\Theta_P)$ for some $\tau\in(0,1)$,
$$\frac{\|\Delta_P\|_F^2}{2\gamma_{\min}^2(\widetilde\Theta_P + v_1\Delta_P)}\leq \frac{\|\Delta_P\|_F^2}{2(1-\tau)^2\gamma_{\min}^2(\widetilde\Theta_P)} \leq \frac{\|\Delta_P\|_F^2}{2(1-\tau)^2\kappa_3^2}.$$
Therefore, 
\begin{align}
\mathbb{P}\left(\max_{P\in\Pi_n}\left\{\frac{\|\Delta_P\|_F^2}{2\gamma_{\min}^2(\widetilde\Theta_P + v_1\Delta_P)}\right\}\geq \frac{t_1}{4}\right) & \leq \mathbb{P}\left(\left\{\max_{P\in\Pi_n}\frac{\|\Delta_P\|_F^2}{2(1-\tau)^2\kappa_3^2}\geq \frac{t_1}{4}\right\} \cup \left\{\max_{P\in\Pi_n}\frac{\|\Delta_P\|_F^2}{\kappa_3^2}>\tau^2\right\}\right) \nonumber\\
    & \leq  \mathbb{P}\left(\max_{P\in\Pi_n}\|\Delta_P\|_F^2 \geq (1-\tau)^2\kappa_3^2\frac{t_1}{2} \right) + \mathbb{P}\left( \max_{P\in\Pi_n}\|\Delta_P\|_F^2 \geq \tau^2\kappa_3^2\right)\nonumber \\
    & \leq 2\mathbb{P}\left(\max_{P\in\Pi_n}\|\Delta_P\|_F^2 \geq \kappa_3^2\min\left\{ (1-\tau)^2\frac{t_1}{2}, \tau^2\right\}\right)\nonumber \\ 
    & \leq 2c_1\exp\left(2\log n - \frac{c_2' m \kappa_3^2t_1}{32\kappa_2(\|A\|_F^2 + n)} \right),\label{eq:proof-thm-loweboundkronecker}
\end{align}
where $c_2'= 2(1-\tau)^2c_2$, $t_1\leq T_2 :=  (2(1-\tau)^2\kappa_3^2)^{-1} 4\min\left\{\frac{1}{8\kappa_2}, 8\kappa_2\delta (\|A\|_F^2 +n) \right\}$, and $\tau$ is a constant such that $t_1\leq \frac{2\tau^2}{(1-\tau)^2}$.

Finally, to obtain an upper bound for the last term of Equation~\eqref{eq:proofthm-4termsbound}, observe  that the value of $X_P$ can be bounded from below using  Equations~\eqref{eq:taylor-logdet}, \eqref{eq:kronecker-lowerbound}, and the fact that $\nabla\ell^\ast(\widetilde\Theta_{P^\ast}) = 0$, which  ensure  there exists a constant  $\tilde{v}\in[0,1]$ such that
\begin{align*}
   X_P & = \frac{1}{2}\ve{\widetilde\Delta_P}^\top(\widetilde\Theta_{P^\ast} + \tilde{v}\widetilde{\Delta}_P)^{-1}\otimes(\widetilde\Theta_{P^\ast} + \tilde{v}\widetilde{\Delta}_P)^{-1}\ve{\widetilde\Delta_P} \nonumber\\
    & \geq \frac{\|\widetilde\Delta_P\|_F^2}{2\gamma_{\max}^2(\widetilde\Theta_{P^\ast} + \tilde v\widetilde\Delta_P)}\nonumber\\
    & \geq \frac{\|\widetilde\Delta_P\|_F^2}{2\kappa_2^2}.
\end{align*}
Therefore, using the previous bound and Lemma~\ref{lemmma:covariance_frobenius}, 
\begin{align}
    \mathbb{P}\left(\max_{P\neq P^\ast}\left\{\|\widehat\Sigma - \Sigma\|_{F, \mathcal{I}_P\cup\mathcal{I}_{P^\ast}} - \frac{X_P}{4\|\widetilde\Delta_P\|_F}\right\} > 0\right) & \leq 
    \mathbb{P}\left(\max_{P\neq P^\ast}\left\{\|\widehat\Sigma - \Sigma\|_{F, \mathcal{I}_P\cup\mathcal{I}_{P^\ast}} - \frac{\|\widetilde\Delta_P\|_F}{8\kappa_2^2}\right\} > 0\right) \nonumber \\
    & \leq 
  \mathbb{P}\left(\max_{P\neq P^\ast}\|\widehat\Sigma - \Sigma\|_{F, \mathcal{I}_P\cup\mathcal{I}_{P^\ast}}^2   > \min_{P\neq P}\frac{\|\widetilde\Delta_P\|_F^2}{64\kappa_2^4}\right) \nonumber\\
& \leq c_1\exp\left(2\log n - \frac{c_2mt_2}{(2\|A\|_F^2 + n)}\right),\label{eq:proofthm-sigmaXp-bound}
\end{align}
for any $t_2\leq T_3:=\min\left\{\frac{\|\widetilde\Delta_P\|_F^2}{64\kappa_2^4}, \delta(2\|A\|_F^2+n)\right\}$, where we used the fact that $|\mathcal{I}_P\cup\mathcal{I}_{P^\ast}|\leq 2\|A\|_F^2 + n$.

Therefore, setting 
$$t_1 = t_2 = \left(\frac{24\max\{1, 8\kappa_2\}}{\min\{c_2, c_2'\kappa_3^2\}}\right) \frac{(\|A\|_F^2+ n)\log n}{m},$$
observe that the condition in Equation~\eqref{eq:thm-condition} of the Theorem guarantee that $t_1\leq T_1\wedge T_2$ and $t_2\leq T_3$, 
and combining Equations~\eqref{eq:proof-thm-Delta-P-Sigma}, \eqref{eq:proof-thm-loweboundkronecker} and \eqref{eq:proofthm-sigmaXp-bound} into Equation~\eqref{eq:proofthm-4termsbound}, we obtain that
\begin{equation*}
    \mathbb{P}\left(\hat{\ell}(\widehat\Theta_{P^\ast}) > \hat{\ell}(\widehat\Theta_P)\ \forall P\in\Pi_n\setminus\{P^\ast\}\right) \geq 1- \frac{c_1}{n}.
\end{equation*}

\end{proof}

\section{Proof of Theorem 2}
\begin{proof}
Under the assumptions of the theorem, the expression of the log-likelihood can be simplified via
\begin{align*}
    \hat \ell_{m}(\theta, P) & = \frac{1}{m} \sum_{k=1}^m \left[ \theta\sum_{i \neq j} (P^\top AP)_{ij}B_{ik}B_{ij}- \log Z(\theta)\right]\\
    & = \theta \text{Tr}\left(P^\top AP \tilde B\right)  - \Psi(\theta),
\end{align*}
where $\Psi(\theta):=\log Z(\theta) =\log\left(\sum_{u\in\mathcal{U}} \exp(\theta u)\right)$ is the corresponding log-partition function with $\mathcal{U}=\{y^\top Ay:y\in\{0,1\}^n\}$. Observe that $\Psi(\theta)$ does not depend on $P$.


Define $\hat{\theta}_P$ as the profile MLE of $\theta$ given  $P\in\Pi_n$, i.e.,
$$\hat{\theta}_P = \argmax_{\theta\in\mathbb{R}}\hat{\ell}(\theta,P).$$
By taking a derivative of the log-likelihood and equating to zero, the profile MLE satisfies
\begin{align}
\label{eq:GM-MLE-deriv}
    \text{Tr}\left(P^\top AP \tilde B\right)  & = \left.\frac{d\Psi(\theta)}{d\theta}\right|_{\theta = \hat{\theta}_P}\\
    & = \frac{\sum_{u\in\mathcal{U}} u\exp(u\hat{\theta}_P )}{Z(\hat{\theta}_P)}.\notag
\end{align}
The last equation is an increasing function of $\hat{\theta}_P$. To observe that, we show that the second derivative of $\Psi(\theta)$ is positive. This is given by
\begin{align*}
    \frac{d^2\Psi(\theta)}{d\theta^2} & = \frac{Z(\theta)\left[\sum_{u\in\mathcal{U}}u^2\exp(u\theta)\right] - \left(\sum_{u\in\mathcal{U}}u\exp(u\theta)\right)\left(\sum_{v\in\mathcal{U}}v\exp(v\theta)\right)} {(Z(\theta))^2}\\
    & = \frac{\sum_{u\in\mathcal{U}}\sum_{v\in\mathcal{U}}\left(u^2\exp(u\theta)\exp(v\theta) - uv\exp(u\theta)\exp(v\theta)\right)} {(Z(\theta))^2}\\
    & = \frac{\sum_{u,v}\frac{1}{2}\left(u-v\right)^2[\exp(u\theta)\exp(v\theta)]} {(Z(\theta))^2}\geq 0,
\end{align*}
and it is only equal to zero when $u=v$ for all $u,v\in\mathcal{U}$, which implies that $A=0$. Hence, the second derivative of $\Psi(\theta)$ is strictly positive.

Using  Equation \eqref{eq:GM-MLE-deriv}, the MLE for $P$ is
\begin{align}
    \hat{P}_{\text{MLE}} & = \argmax_{P\in\Pi_n}\hat{\ell}(\hat{\theta}_P, P) \notag\\
    & = \argmax_{P\in\Pi_n}\left\{ \hat{\theta}_P \left.\frac{d\Psi(\theta)}{d\theta}\right|_{\theta = \hat{\theta}_P} - \Psi(\hat{\theta}_P) \right\}. \label{eq:PMLE}
\end{align}
Observe that the previous objective function only depends on $\hat{\theta}_P$. 
Moreover, the derivative of this function with respect to $\theta$ and evaluated at $\hat{\theta}_P$ is equal to
$$
\hat{\theta}_P\left.\frac{d^2\Psi(\theta)}{d\theta^2}\right|_{\theta = \hat{\theta}_P}.
$$
When $\hat{\theta}_{\text{MLE}}=\hat{\theta}_{\hat P_{\text{MLE}}}$ is positive, this derivative is greater than $0$, which means that the objective function $\hat{\ell}(\hat{\theta}_P, P)$ in Equation~\eqref{eq:PMLE} is increasing as a function of $\hat{\Theta}_P$, 
and $\hat P_{\text{MLE}}$ is chosen via 
\begin{equation}
    \hat{P}_{\text{MLE}} = \argmax_{P\in\Pi_n} \text{Tr}\left(P^\top AP\tilde{B}\right) = \argmin_{P\in\Pi_n}\|A-P\tilde{B}P^\top \|_F^2.\label{eq:mle-profile}
\end{equation}
To see this, consider the existence of a $P$ such that 
$$\text{Tr}\left(P^\top AP\tilde{B}\right)>\text{Tr}\left(\hat P_{\text{MLE}}^\top A\hat P_{\text{MLE}}\tilde{B}\right).$$
Then from Equation~\eqref{eq:GM-MLE-deriv}, $0<\hat{\theta}_{\text{MLE}}<\hat{\theta}_{P}$ and because the objective function in \eqref{eq:PMLE} is increasing as a function of $\hat{\theta}_{P}$, then we have that
$$\hat{\ell}(\hat{\theta}_P, P)> \hat{\ell}(\hat{\theta}_{\text{MLE}}, \hat P_{\text{MLE}})$$
yielding a contradiction.
Analogously, when $\hat{\theta}_P$ is negative, the derivative of Equation~\eqref{eq:PMLE} is negative, and hence the MLE for $P$ is obtained via
 \begin{equation}
    \hat{P}_{\text{MLE}} = \argmin_{P\in\Pi_n} \text{Tr}\left(P^\top AP\tilde{B}\right) = \argmax_{P\in\Pi_n}\|A-P\tilde{B}P^\top \|_F^2.
\end{equation}
Finally, if $\hat{\theta}_{\text{MLE}}=0$, then $\hat\theta_P=0$ for all $P$ (by considering the contradiction present for $\hat\theta_P\neq 0$ from Equation~\eqref{eq:GM-MLE-deriv}).
Therefore, any $P$ achieves the optimum value in Equation~\eqref{eq:mle-profile}.
 \end{proof}

\end{document}


\maketitle

We start with some preliminaries and introduce the following symbols to simplify notation. For a matrix $R\in\real^{n\times n}$ and a set of indexes $\mathcal{I}\subset [n]\times[n]$, define $\|R\|_{F, \mathcal{I}}$ as the Frobenius norm of the matrix $R$ calculated only on the entries with indexes in $\mathcal{I}$, that is,
$$\|R\|_{F, \mathcal{I}}^2 := \sum_{(i,j)\in\mathcal{I}}R_{ij}^2.$$
Similarly, for a pair of matrices $R,S\in\real^{n\times n}$, we define the inner product of the entries with indexes in $\mathcal{I}$ as
$$\left\langle R, S\right\rangle_{\mathcal{I}} := \sum_{(i,j)\in\mathcal{I}} R_{ij}S_{ij}.$$
Given a permutation matrix $P\in\Pi_n$, we also define $\mathcal{I}_P$ as the set of indexes for which the solution for $\Theta$ to the constrained optimization graph matching problem with fixed $P$ can take non-zero values, that is,
$$\mathcal{I}_P:= \{(i,j)\in[n]\times [n]: i=j \text{ or } (P^\top AP)_{ij}=1)\}. $$
We also define $\mathbb{M}_P\subset\real^{n\times n}$ as the set of positive definite matrices that only take non-zero values in the entries corresponding to the set defined above, that is,
$$\mathbb{M}_P :=\{\Theta\in\real^{n\times n}:\Theta\succ 0\text{ and }\Theta_{ij}=0\text{ if }(i,j)\notin \mathcal{I}_P\}.$$
Observe that if $R\in\mathbb{M}_P$, then $\left\langle R, S\right\rangle = \left\langle R, S\right\rangle_{\mathcal{I}_P}$.

 For real matrices $R,S\in\real^{n\times n}$, we write $\ve{R}\in\real^{n^2}$ as the vectorized adjacency matrix, and $R\otimes S\in\real^{n^2\times n^2}$ as the Kronecker product of $R$ and $S$.
 
We also recall some properties of the log-determinant function. For a pair of positive definite matrices $\Theta_1, \Theta_2\in\real^{n\times n}$, a Taylor expansion of the function for $\Theta_2$ around $\Theta_1$ is given by
    \begin{align}
        \log|\Theta_2| & = \log |\Theta_1|  + \left\langle\Delta, \Theta_1^{-1}\right\rangle - \frac{1}{2}\ve{\Delta}^\top(\Theta_1 + v\Delta)^{-1}\otimes(\Theta_1 + v\Delta)^{-1}\ve{\Delta}, \label{eq:taylor-logdet}
    \end{align}
   for some $v\in[0,1]$, and with $\Delta = \Theta_2 - \Theta_1$. It is also useful to bound the last term in the previous equation. Using the fact that $\gamma_{\max}(R\otimes R) = \gamma_{\max}^2(R)$ and $\gamma_{\min}(R\otimes R) = \gamma_{\min}^2(R)$ for a positive semidefinite matrix $R$, we have
\begin{align}
    \ve{\Delta}^\top(\Theta + v\Delta)^{-1}\otimes(\Theta + v\Delta)^{-1}\ve{\Delta} & \leq \frac{\|\Delta\|^2_F}{\gamma_{\min}^2(\Theta + v\Delta)} \label{eq:kroneckerupperbound}\\
    \ve{\Delta}^\top(\Theta + v\Delta)^{-1}\otimes(\Theta + v\Delta)^{-1}\ve{\Delta} & \geq \frac{\|\Delta\|^2_F}{\gamma_{\max}^2(\Theta + v\Delta)} \label{eq:kronecker-lowerbound}
\end{align}

\section{Proof of Theorem 1, part 1.}
\begin{proof}[Proof of Theorem~\ref{thm:main}, part 1.]
First, assume that 
 $\beta=0$ in the model \eqref{eq:GLM-full}. Then, the empirical log-likelihood for a value of $m$ is given by
 \begin{equation}
    \hat{\ell}_{m}(\Theta) := \frac{1}{m}\sum_{k=1}^m\left(\sum_{i\neq j}\Theta_{ij}B_{ik}B_{jk} -2\sum_{i=1}^n\Theta_{ii}C(B_{ik})\right) -\log(Z(\Theta)),\label{eq:empirical-loglik}
 \end{equation}
with $Z(\Theta)$ the partition function. Note that the log-likelihood function here is concave because the model belongs to the exponential family. 

Define $\hat{\Theta}_{m} := \argmax_{\Theta}\hat{\ell}(\Theta)$, and 
$\hat{\Theta}_{P,m} := \argmax_{\Theta}\hat{\ell}(\Theta)$ subject to $\Theta_{ij}(1-(P^\top AP)_{ij})=0$. To show that log-likelihood function is well behaved   in order to show consistency of $\hat{\Theta}_{P^\ast,m}$ and $\hat{\Theta}_{m}$ (see for example 5.2.2 of \cite{bickel2015mathematical}), we observe that the normalizing constant in Equation~\eqref{eq:GLM-full} is finite for any $\Theta\in\real^{n\times n}$ for both the Ising model and the multivariate Gaussian distribution, so the natural parameter space is open.   
In addition, the restriction in the optimization problem enforces the structure of $\hat{\Theta}_{P^\ast, m}$, to have zero entries in the same places as $\Theta^\ast$, so the effective parameters are the non-zero entries of $\Theta^\ast$.
Therefore, the estimators $\hat{\Theta}_{P^\ast,m}$ and $\hat{\Theta}_{m}$ are consistent for $\Theta^\ast$. By the weak law of large numbers, $\frac{1}{m}\sum_{k=1}^m B_{ik}B_{jk}\rightarrow \e[B_{i1}B_{j1}]$, and hence, by
continuity of the function $Z(\Theta)$, $\hat{\ell}_m(\hat{\Theta}_m)\overset{\mathbb{P}}{\rightarrow}\ell^\ast(\Theta^\ast)$ and $\hat{\ell}_m(\hat{\Theta}_{P^\ast,m})\overset{\mathbb{P}}{\rightarrow}\ell^\ast(\Theta^\ast)$. 

Now, suppose that there exists sequences  $\{m_k\}_{k=1}^\infty\subset\mathbb{N}$ and  $\{P_{m_k}\}_{k=1}^\infty$ such that for each $m_k$,  the solution of \eqref{eq:GMP-uni-bip} for $\hat{\ell}_{m_k}$ is $P_{m_k}$. Because $\Pi_n$ is finite, there exists a permutation $\tilde{P}$ for which $\tilde{P}=P_{m_k}$ for infinitely many values of $k$. This permutation  should satisfy 
$$\hat\ell_{m_k}(\hat{\Theta}_{m_k}) \geq \hat\ell_{m_k}(\hat{\Theta}_{\tilde{P}, m_k}) \geq \hat\ell_{m_k}(\hat{\Theta}_{P^\ast, m_k}).$$
The left inequality is a consequence of the optimality of $\hat{\Theta}_{m_k}$ in the unrestricted problem, while the right inequality is the assumption in the definition of $\tilde{P}$.
Hence, $\hat\ell_{m_k}(\hat{\Theta}_{\tilde{P}, m_k})\overset{\mathbb{P}}{\rightarrow}{\ell^\ast(\Theta^\ast)}$. The sequence of functions $\{\hat\ell_m\}$ and $\ell^\ast$ are concave (Theorem 1.6.3 of \cite{bickel2015mathematical}, in addition to the linearity w.r.t. $\Theta$ in Equation~\eqref{eq:empirical-loglik}). This
implies that for any compact set $\mathcal{K}\subset \real^{n\times n}$, $\hat{\ell}_m$ converges uniformly in probability to $\ell^\ast$ in $\mathcal{K}$ (Theorem II.1 of \cite{andersen1982cox}). Taking $\mathcal{K}_{\Theta^\ast}$ such that $\Theta^\ast$ is in the interior of $\mathcal{K}_{\Theta^\ast}$, condition (i) implies that $\lim\mathbb{P}(\hat{\Theta}_{\tilde{P},m}\in\text{int}(\mathcal{K}_{\Theta^\ast}))=1$, and therefore the uniform convergence implies that $\hat{\Theta}_{\tilde{P},m}\overset{\mathbb{P}}{\rightarrow}\Theta^\ast$. 
Since $\Theta^\ast_{ij}(1(\tilde{P}^\top A\tilde{P})_{ij})=0$, condition (ii) guarantees that $\tilde{P}^\top A\tilde{P} = (P^\ast)^\top AP^\ast$. This is valid for any $P$ that appears infinitely many times on $\{P_{m_k}\}$, which implies the result. 

The last statement of the theorem trivially holds because if $A$ does not have non-identity automorphisms then $\tilde{P}=P^\ast$ is the only solution for $P$ in $\tilde{P}^\top A\tilde{P} = (P^\ast)^\top AP^\ast$. The proof for the case in which $\beta\neq 0$ follows mutatis mutandis.
\end{proof}





\section{Proof of Theorem 1, part 2}

The next lemma follows from an application of Lemma A.3 of \cite{bickel2008regularized} for the concentration of the entries of  the sample covariance matrix. In this section, we use $\Sigma:=(\Theta^\ast)^{-1}$ to denote the covariance matrix of the Gaussian graphical model for $B^{(m)}$, and $\widehat\Sigma = B^{(m)}(B^{(m)})^\top$ is the sample covariance matrix.

\begin{lemma} \label{lemmma:covariance_frobenius}
Suppose that $\gamma_{\max}(\Sigma)\leq \kappa_1$ for some $\kappa_1>0$. Then, there exist positive constants  $c_1, c_2$ and $\delta$  that only depend on $\kappa_1$  such that 


\begin{equation*}
    \mathbb{P}\left(\max_{P\in\Pi_n}\|\widehat\Sigma - \Sigma\|_{F,\mathcal{I}_P}^2 >  t \right) \leq c_1\exp\left(2\log n - c_2\frac{mt}{\|A\|_F^2 + n}\right).
\end{equation*}
for any  $t\leq \delta (\|A\|_F^2 + n)$.
\end{lemma}

\begin{proof}
Observe that, according to \cite{bickel2008regularized},
\begin{align*}
    \mathbb{P}\left(\max_{i,j} (\widehat\Sigma_{ij} - \Sigma_{ij})^2 > t\right) \leq & \mathbb{P}\left(\bigcup_{i,j}(\widehat\Sigma_{ij} - \Sigma_{ij})^2 > t\right)\\
    \leq & \sum_{i=1}^n\sum_{j=1}^n
     \mathbb{P}\left((\widehat\Sigma_{ij} - \Sigma_{ij})^2 > t\right)\\
     \leq & c_1n^2\exp\left( -c_2mt\right),
\end{align*}
for any $t\leq \delta$, with $c_1, c_2$ and $\delta$ some positive constants that only depend on $\kappa_1$. Note that $|\mathcal{I}_P|=\|A\|_F^2+n$. Hence,
\begin{align*}
    \mathbb{P}\left(\max_{P\in\Pi_n}\|\widehat\Sigma - \Sigma\|_{F,\mathcal{I}_P}^2 > t\right)
    \leq & \mathbb{P}\left(\max_{P\in\Pi_n}|\mathcal{I}_P|\max_{i,j}(\widehat\Sigma_{ij} - \Sigma_{ij})^2 > t\right)\\
    = & \mathbb{P}\left(\max_{i,j}(\widehat\Sigma_{ij} - \Sigma_{ij})^2 >\frac{t}{\|A\|_F^2+n} \right)\\
    \leq & c_1\exp\left(2\log n - c_2\frac{mt}{\|A\|_F^2 + n}\right),
\end{align*}
for any $t\leq \delta (\|A\|_F^2+n)$.

\end{proof}

The next lemma studies the concentration of the estimator of the graphical model parameters restricted to a  given permutation $P\in\Pi_n$, that is
\begin{equation*}
    \widehat\Theta_P = \argmax_{\Theta} \hat{\ell}(\Theta) \text{ subject to }\Theta_{ij}(1-(P^\top A P)_{ij}) = 0.
\end{equation*}
To prove this result, we follow a similar argument to \cite{Rothman2008}.

\begin{lemma} \label{lemma:theta-error} Suppose that $\gamma_{\max}(\Sigma)\leq \kappa_1$ and $\max_{P\in\Pi_n}\gamma_{\max}(\widetilde\Theta_P) \leq \kappa_2$ for some positive $\kappa_1, 
\kappa_P$. Then, there exists positive constants $c_1, c_2$ and $\delta$ that only depend on $\kappa_1$ such that
\begin{equation*}
\mathbb{P}\left(\max_{P\in\Pi_n}\|\widehat\Theta_P - \widetilde\Theta_P\|_F^2 > t\right) \leq  c_1\exp\left(2\log n - c_2\frac{mt}{8\kappa_2(\|A\|_F^2 + n)}\right)    
\end{equation*}
for any $t\leq \min\left\{ \frac{1}{8\kappa_2}, 8\kappa_2\delta (\|A\|_F^2 +n) \right\}$.
\end{lemma}
\begin{proof}
We show that for any $\Theta\in\mathbb{M}_P$ and some positive values $t_1$ and $t_2$ such that
$$t_1 < \|\Theta - \widetilde\Theta_P\|_F < t_2,$$
the  loglikelihood function satisfies $\hat{\ell}(\Theta) < \hat{\ell}(\widetilde\Theta_P) \leq \hat{\ell}(\widehat\Theta_P)$. Hence,  the concavity of $\hat{\ell}$ implies that the optimal solution $\widehat\Theta_P$ should satisfy $\|\widehat\Theta_P - \widetilde\Theta_P\|_F \leq t_1$. 

For a matrix $\Theta \in\mathbb{M}_P$, write $\Delta = \Theta - \widetilde\Theta_P$. The Taylor expansion of the log-determinant function in Equation~\eqref{eq:taylor-logdet} and the inequalities in Equation~\eqref{eq:kronecker-lowerbound}
imply that there exists a constant $v\in[0,1]$ such that the difference between the  values of the loglikelihood function at $\widetilde\Theta_P$ and $\Theta$ can be expressed as
\begin{align}
    \hat{\ell}(\widetilde\Theta_P) - \hat\ell(\Theta)  = & \left\langle \Delta, \widehat\Sigma\right\rangle - \left(\log|\Theta| - \log |\widetilde\Theta_P|\right)\nonumber \\
    = & \left\langle \Delta, \widehat\Sigma -\widetilde\Theta_P^{-1}\right\rangle_{\mathcal{I}_P} +\frac{1}{2}\ve{\Delta}^\top(\widetilde\Theta_P + v\Delta)^{-1}\otimes(\widetilde\Theta_P + v\Delta)^{-1}\ve{\Delta} \nonumber\\
    \geq & -\|\Delta\|_F\|\widehat\Sigma - \widetilde\Theta_P^{-1}\|_{F, \mathcal{I}_P} +\frac{\|\Delta\|_F^2}{2\gamma_{\max}^2(\widetilde\Theta_P + v\Delta)} \nonumber\\
    \geq & \|\Delta\|_F\left( \frac{\|\Delta\|_F}{2(\gamma_{\max}(\widetilde\Theta_P) + \|\Delta\|_F)^2} -\|\widehat\Sigma - \widetilde\Theta_P^{-1}\|_{F, \mathcal{I}_P}\right).\label{eq:lemma-thetaalt-positive}
\end{align}
Notice that the last term in the previous equation can also be written as
$$\|\widehat\Sigma - \widetilde\Theta_P^{-1}\|_{F, \mathcal{I}_P} = \|\widehat\Sigma - \Sigma\|_{F, \mathcal{I}_P}.$$
To check this last equality, observe that the optimality conditions for $\widetilde\Theta_P$  imply 
$$\left[\nabla\hat\ell(\widetilde\Theta_P)\right]_{ij} = -\Sigma_{ij} + \left[\widetilde\Theta_P^{-1}\right]_{ij}=0,\quad\quad\text{for all }(i,j)\in\mathcal{I}_P.$$
Hence, a sufficient condition for Equation~\ref{eq:lemma-thetaalt-positive} to be strictly positive is that $\|\Delta\|_F < \gamma_{\max}(\widetilde\Theta_P)$ and
\begin{equation}
    8\gamma^2_{\max}(\widetilde\Theta_P)\|\widehat\Sigma - \Sigma\|_{F, \mathcal{I}_P}< \|\Delta\|_F , \label{eq:proof-thetaconc-positivecondition}
\end{equation}
and these inequalities have a solution if
\begin{equation*}
    \|\widehat\Sigma - \Sigma \|_{F, \mathcal{I}_P} < \frac{1}{8\gamma_{\max}(\widetilde\Theta_P)}.
\end{equation*}
Thus, conditioned on the event in the equation above, by the arguments stated at the beginning of the proof, Equation~\eqref{eq:proof-thetaconc-positivecondition} implies that
\begin{equation*}
    \|\widehat\Theta_P - \widetilde\Theta_P \|_F \leq  8\gamma^2_{\max}(\widetilde\Theta_P)\|\widehat\Sigma - \Sigma\|_{F, \mathcal{I}_P},
\end{equation*}
and hence, if
$\max_{P\in\Pi_n}\|\widehat\Theta_P - \widetilde\Theta_P\|_F > t$
then $\max_{P\in\Pi_n}8\gamma_{\max}^2(\widetilde\Theta_P)\|\widehat\Sigma - \Sigma\|_{F,\mathcal{I}_P}>t$ for any $t\leq \frac{1}{8\max_{P\in\Pi_n}\gamma_{\max}(\widetilde\Theta_P)}$. Therefore, using Lemma~\ref{lemmma:covariance_frobenius},
\begin{align*}
    \mathbb{P}\left(\max_{P\in \Pi_n}\|\widehat\Theta_P - \widetilde\Theta_P\|_F \geq t\right) & \leq \mathbb{P}\left(\max_{P\in \Pi_n} \|\widehat\Sigma - \Sigma\|_{F,\mathcal{I}_P} \geq \frac{t}{8\max_{P\in\Pi_n}\gamma_{\max}(\widetilde\Theta_P)}\right) \\
    & \leq c_2\exp\left(2\log n - c_1\frac{mt}{8(\|A\|_F^2 + n)\max_{P\in\Pi_n}\gamma_{\max}(\widetilde\Theta_P)}\right),
\end{align*}
for any $t\leq \min\left\{ \frac{1}{8\kappa_2}, 8\kappa_2\delta (\|A\|_F^2 +n) \right\}$.





\end{proof}

Using the previous lemmas, we are now ready to prove the main result.

\begin{proof}[Proof of Theorem \ref{thm:main}, part 2.]
To prove that the correct permutation $P^\ast$ is exactly recovered, we show that with high probability,
$$\hat{\ell}(\widehat{\Theta}_{P^\ast}) > \hat{\ell}(\widehat{\Theta}_{P})\quad\quad\quad\text{for all }P\in\Pi_n\setminus\{P^\ast\}.$$

Define $\Delta_{P} := \widehat\Theta_{P}-\widetilde\Theta_{P}$,  $\Delta_{P^\ast} := \widetilde\Theta_{P^\ast}-\widehat\Theta_{P^\ast}$ and $\widetilde\Delta_P := \widetilde\Theta_P - \widetilde\Theta_{P^\ast}$. Then, using the Taylor expansion~\eqref{eq:taylor-logdet}, there exist $v_1,v_2\in[0,1]$ such that the difference of the loglikelihood values at the solutions $\widehat\Theta_{P^\ast}$ and $\widehat\Theta_P$
can be bounded from below as follows:
\begin{align}
\hat{\ell}(\widehat{\Theta}_{P^\ast}) - \hat{\ell}(\widehat{\Theta}_{P})  = & \left\langle\widehat\Theta_{P} - \widehat\Theta_{P^\ast}, \widehat\Sigma\right\rangle  -\left[\log|\widehat\Theta_{P}| - \log|\widehat\Theta_{P^\ast}|  \right]\nonumber\\
 = & \left\langle \widetilde\Theta_{P}-\widetilde\Theta_{P^\ast},\Sigma\right\rangle + \left\langle \widehat\Theta_{P}-\widetilde\Theta_{P},\Sigma\right\rangle + \left\langle \widetilde\Theta_{P^\ast}-\widehat\Theta_{P^\ast},\Sigma\right\rangle + \left\langle \widetilde\Delta_P,\widehat\Sigma-\Sigma\right\rangle +\nonumber\\
& \left\langle \widehat\Theta_{P}-\widetilde\Theta_{P},\widehat\Sigma - \Sigma\right\rangle   - \log|\widehat\Theta_P| + \left\langle \widetilde\Theta_{P^\ast}-\widehat\Theta_{P^\ast},\widehat\Sigma-\Sigma\right\rangle  + \log|\widehat\Theta_{P^\ast}| \nonumber\\
 = & \left\{ {\ell}^\ast(\widetilde{\Theta}_{P^\ast}) -{\ell}^\ast(\widetilde{\Theta}_{P})\right\} + \left\langle \Delta_{P},\widehat\Sigma - \Sigma\right\rangle + \left\langle \Delta_{P^\ast},\widehat\Sigma-\Sigma\right\rangle  + \left\langle \widetilde\Theta_{P}-\widetilde\Theta_{P^\ast},\widehat\Sigma-\Sigma\right\rangle+ \nonumber\\
 & \left\{ \left\langle \Delta_{P},\Sigma\right\rangle -\left[\log|\widehat\Theta_P| - \log|\widetilde\Theta_P|\right]\right\} +  \left\{\left\langle\Delta_{P^\ast},\Sigma\right\rangle -\left[\log|\widetilde\Theta_{P^\ast}| - \log|\widehat\Theta_{P^\ast}|\right]\right\} \nonumber\\
    = & \left\{ {\ell}^\ast(\widetilde{\Theta}_{P^\ast}) -{\ell}^\ast(\widetilde{\Theta}_{P})\right\} + \left\langle \Delta_{P},\widehat\Sigma - \Sigma\right\rangle
    _{\mathcal{I}_P}+ \left\langle \Delta_{P^\ast},\widehat\Sigma-\Sigma\right\rangle_{\mathcal{I}_{P^\ast}}  + \left\langle \widetilde\Delta_P,\widehat\Sigma-\Sigma\right\rangle_{\mathcal{I}_P\cup\mathcal{I}_{P^\ast}} + \nonumber\\
    &-
      \frac{1}{2}\ve{\Delta_P}^\top(\widetilde\Theta_P + v_1\Delta_P)^{-1}\otimes(\widetilde\Theta_P + v_1\Delta_P)^{-1}\ve{\Delta}\nonumber\\
      & + \frac{1}{2}\ve{\Delta_{P^\ast}}^\top(\widetilde\Theta_{P^\ast} + v_2\Delta_{P^\ast})^{-1}\otimes(\widetilde\Theta_{P^\ast} + v_2\Delta_{P^\ast})^{-1}\ve{\Delta_{P^\ast}}\nonumber\\
    \geq & \left\{\ell^\ast(\widetilde\Theta_{P^\ast}) - \ell^\ast(\widetilde\Theta_{P})\right\}  -\|\Delta_P\|_F\|\widehat\Sigma - \Sigma\|_{F, \mathcal{I}_P}  -\|\Delta_{P^\ast}\|_F\|\widehat\Sigma - \Sigma\|_{F, \mathcal{I}_{P^\ast}}\nonumber\\
      &- \|\widetilde\Delta_P\|_F\|\widehat\Sigma - \Sigma\|_{F, \mathcal{I}_P\cup\mathcal{I}_{P^\ast}} - \frac{\|\Delta_P\|_F^2}{2\gamma_{\min}^2(\widetilde\Theta_P + v_1\Delta_P)}\nonumber\\
      = &  X_P - \widehat{X}_P,\label{eq:proof-thm-lowerbound}
  \end{align}
where
\begin{equation*}
    X_P := \ell^\ast(\widetilde\Theta_{P^\ast}) - \ell^\ast(\widetilde\Theta_{P}),
\end{equation*}
\begin{align*}
    \widehat{X}_P  := & \|\Delta_P\|_F\|\widehat\Sigma - \Sigma\|_{F, \mathcal{I}_P}  +\|\Delta_{P^\ast}\|_F\|\widehat\Sigma - \Sigma\|_{F, \mathcal{I}_{P^\ast}} +\\
    & \|\widetilde\Delta_P\|_F\|\widehat\Sigma - \Sigma\|_{F, \mathcal{I}_P\cup\mathcal{I}_{P^\ast}} + \frac{\|\Delta_P\|_F^2}{2\gamma_{\min}^2(\widetilde\Theta_P + v_1\Delta_P)}.
\end{align*}
Hence, a sufficient condition for exact recovery of $P^\ast$ is that $X_P> \widehat X_P$ for all $P\neq P^\ast$, and thus, the probability that the solution $\widehat P$ is different from the correct permutation $P^\ast$ can be bounded from above as
\begin{align}
    \mathbb{P}\left(\bigcup_{P\neq P^\ast} \left\{\hat{\ell}(\widehat\Theta_P) >\hat{\ell}(\widehat\Theta_{P^\ast})\right\} \right)    \leq  & 
    \mathbb{P}\left(\exists P\neq P^\ast \text{ s.t. }\widehat{X}_P > X_P \right)\nonumber\\
    \leq & 
    \mathbb{P}\left(\max_{P\neq P^\ast}\left\{\widehat{X}_P - X_P\right\} > 0 \right)\nonumber\\
    \leq & \mathbb{P}\left(\max_{P\neq P^\ast}\left\{\|\Delta_P\|_F\|\widehat\Sigma - \Sigma\|_{F, \mathcal{I}_P}\right\} > \frac{t_1}{4}\right) +\nonumber \\
    & \mathbb{P}\left(\left\{\|\Delta_{P^\ast}\|_F\|\widehat\Sigma - \Sigma\|_{F, \mathcal{I}_{P^\ast}}\right\} > \frac{t_1}{4}\right) + \nonumber\\
    &\mathbb{P}\left(\max_{P\neq P^\ast}\left\{\frac{\|\Delta_P\|_F^2}{2\gamma_{\min}^2(\widetilde\Theta_P + v_1\Delta_P)}\right\} > \frac{t_1}{4}\right) + \nonumber \\
    & \mathbb{P}\left(\max_{P\neq P^\ast}\left\{\|\widehat\Sigma - \Sigma\|_{F, \mathcal{I}_P\cup\mathcal{I}_{P^\ast}} - \frac{X_P}{4\|\widetilde\Delta_P\|_F}\right\} > 0\right), \label{eq:proofthm-4termsbound}
\end{align}
where $t_1$ is any positive constant  that satisfies $t_1\leq \min_{P\neq P^\ast} X_P$.

To bound the  terms in Equation~\eqref{eq:proofthm-4termsbound}, we use  Lemmas~\ref{lemmma:covariance_frobenius} and \ref{lemma:theta-error}.
For the first and second terms, observe that
\begin{align}
    \mathbb{P}\left(\max_{P\in\Pi_n}\|\Delta_P\|_F\|\widehat\Sigma - \Sigma\|_{F, \mathcal{I}_P} > \frac{t_1}{4}\right) & \leq
    \mathbb{P}\left(\left\{\max_{P\in\Pi_n}\|\Delta_P\|_F > \frac{\sqrt{t_1}}{2}\right\} \cup \left\{\max_{P\in\Pi_n}\|\widehat\Sigma - \Sigma\|_{F, \mathcal{I}_P} > \frac{\sqrt{t_1}}{2}\right\}\right)\nonumber\\
    &\leq
    \mathbb{P}\left(\max_{P\in\Pi_n}\|\Delta_P\|_F^2 > \frac{t_1}{4}\right) + \mathbb{P}\left(\max_{P\in\Pi_n}\|\widehat\Sigma - \Sigma\|_{F, \mathcal{I}_P}^2 > \frac{t_1}{4}\right)\nonumber\\
    & \leq 2c_1\exp\left(2\log n  - 
    \frac{c_2 mt_1}{\max\{32\kappa_2, 4\}(\|A\|_F^2 + n)} \right), \label{eq:proof-thm-Delta-P-Sigma}
\end{align}
for any $t_1\leq T_1:=4\min\left\{ \frac{1}{8\kappa_2}, 8\kappa_2\delta (\|A\|_F^2 +n), \delta (\|A\|_F^2 +n) \right\}$.
To obtain an upper bound for the third term of Equation~\eqref{eq:proofthm-4termsbound},  notice that
\begin{align*}
    \gamma_{\min}(\widetilde\Theta_P + v_1\Delta_P) & \geq \gamma_{\min}(\widetilde\Theta_P) - v_1\gamma_{\max}(\Delta_P)\\
    & \geq \gamma_{\min}(\widetilde\Theta_P) - \|\Delta_P\|_F.
\end{align*}
Hence, under the event $\|\Delta_P\|_F\leq \tau \gamma_{\min}(\widetilde\Theta_P)$ for some $\tau\in(0,1)$,
$$\frac{\|\Delta_P\|_F^2}{2\gamma_{\min}^2(\widetilde\Theta_P + v_1\Delta_P)}\leq \frac{\|\Delta_P\|_F^2}{2(1-\tau)^2\gamma_{\min}^2(\widetilde\Theta_P)} \leq \frac{\|\Delta_P\|_F^2}{2(1-\tau)^2\kappa_3^2}.$$
Therefore, 
\begin{align}
\mathbb{P}\left(\max_{P\in\Pi_n}\left\{\frac{\|\Delta_P\|_F^2}{2\gamma_{\min}^2(\widetilde\Theta_P + v_1\Delta_P)}\right\}\geq \frac{t_1}{4}\right) & \leq \mathbb{P}\left(\left\{\max_{P\in\Pi_n}\frac{\|\Delta_P\|_F^2}{2(1-\tau)^2\kappa_3^2}\geq \frac{t_1}{4}\right\} \cup \left\{\max_{P\in\Pi_n}\frac{\|\Delta_P\|_F^2}{\kappa_3^2}>\tau^2\right\}\right) \nonumber\\
    & \leq  \mathbb{P}\left(\max_{P\in\Pi_n}\|\Delta_P\|_F^2 \geq (1-\tau)^2\kappa_3^2\frac{t_1}{2} \right) + \mathbb{P}\left( \max_{P\in\Pi_n}\|\Delta_P\|_F^2 \geq \tau^2\kappa_3^2\right)\nonumber \\
    & \leq 2\mathbb{P}\left(\max_{P\in\Pi_n}\|\Delta_P\|_F^2 \geq \kappa_3^2\min\left\{ (1-\tau)^2\frac{t_1}{2}, \tau^2\right\}\right)\nonumber \\ 
    & \leq 2c_1\exp\left(2\log n - \frac{c_2' m \kappa_3^2t_1}{32\kappa_2(\|A\|_F^2 + n)} \right),\label{eq:proof-thm-loweboundkronecker}
\end{align}
where $c_2'= 2(1-\tau)^2c_2$, $t_1\leq T_2 :=  (2(1-\tau)^2\kappa_3^2)^{-1} 4\min\left\{\frac{1}{8\kappa_2}, 8\kappa_2\delta (\|A\|_F^2 +n) \right\}$, and $\tau$ is a constant such that $t_1\leq \frac{2\tau^2}{(1-\tau)^2}$.

Finally, to obtain an upper bound for the last term of Equation~\eqref{eq:proofthm-4termsbound}, observe  that the value of $X_P$ can be bounded from below using  Equations~\eqref{eq:taylor-logdet}, \eqref{eq:kronecker-lowerbound}, and the fact that $\nabla\ell^\ast(\widetilde\Theta_{P^\ast}) = 0$, which  ensure  there exists a constant  $\tilde{v}\in[0,1]$ such that
\begin{align*}
   X_P & = \frac{1}{2}\ve{\widetilde\Delta_P}^\top(\widetilde\Theta_{P^\ast} + \tilde{v}\widetilde{\Delta}_P)^{-1}\otimes(\widetilde\Theta_{P^\ast} + \tilde{v}\widetilde{\Delta}_P)^{-1}\ve{\widetilde\Delta_P} \nonumber\\
    & \geq \frac{\|\widetilde\Delta_P\|_F^2}{2\gamma_{\max}^2(\widetilde\Theta_{P^\ast} + \tilde v\widetilde\Delta_P)}\nonumber\\
    & \geq \frac{\|\widetilde\Delta_P\|_F^2}{2\kappa_2^2}.
\end{align*}
Therefore, using the previous bound and Lemma~\ref{lemmma:covariance_frobenius}, 
\begin{align}
    \mathbb{P}\left(\max_{P\neq P^\ast}\left\{\|\widehat\Sigma - \Sigma\|_{F, \mathcal{I}_P\cup\mathcal{I}_{P^\ast}} - \frac{X_P}{4\|\widetilde\Delta_P\|_F}\right\} > 0\right) & \leq 
    \mathbb{P}\left(\max_{P\neq P^\ast}\left\{\|\widehat\Sigma - \Sigma\|_{F, \mathcal{I}_P\cup\mathcal{I}_{P^\ast}} - \frac{\|\widetilde\Delta_P\|_F}{8\kappa_2^2}\right\} > 0\right) \nonumber \\
    & \leq 
  \mathbb{P}\left(\max_{P\neq P^\ast}\|\widehat\Sigma - \Sigma\|_{F, \mathcal{I}_P\cup\mathcal{I}_{P^\ast}}^2   > \min_{P\neq P}\frac{\|\widetilde\Delta_P\|_F^2}{64\kappa_2^4}\right) \nonumber\\
& \leq c_1\exp\left(2\log n - \frac{c_2mt_2}{(2\|A\|_F^2 + n)}\right),\label{eq:proofthm-sigmaXp-bound}
\end{align}
for any $t_2\leq T_3:=\min\left\{\frac{\|\widetilde\Delta_P\|_F^2}{64\kappa_2^4}, \delta(2\|A\|_F^2+n)\right\}$, where we used the fact that $|\mathcal{I}_P\cup\mathcal{I}_{P^\ast}|\leq 2\|A\|_F^2 + n$.

Therefore, setting 
$$t_1 = t_2 = \left(\frac{24\max\{1, 8\kappa_2\}}{\min\{c_2, c_2'\kappa_3^2\}}\right) \frac{(\|A\|_F^2+ n)\log n}{m},$$
observe that the condition in Equation~\eqref{eq:thm-condition} of the Theorem guarantee that $t_1\leq T_1\wedge T_2$ and $t_2\leq T_3$, 
and combining Equations~\eqref{eq:proof-thm-Delta-P-Sigma}, \eqref{eq:proof-thm-loweboundkronecker} and \eqref{eq:proofthm-sigmaXp-bound} into Equation~\eqref{eq:proofthm-4termsbound}, we obtain that
\begin{equation*}
    \mathbb{P}\left(\hat{\ell}(\widehat\Theta_{P^\ast}) > \hat{\ell}(\widehat\Theta_P)\ \forall P\in\Pi_n\setminus\{P^\ast\}\right) \geq 1- \frac{c_1}{n}.
\end{equation*}

\end{proof}

\section{Proof of Theorem 2}
\begin{proof}
Under the assumptions of the theorem, the expression of the log-likelihood can be simplified via
\begin{align*}
    \hat \ell_{m}(\theta, P) & = \frac{1}{m} \sum_{k=1}^m \left[ \theta\sum_{i \neq j} (P^\top AP)_{ij}B_{ik}B_{ij}- \log Z(\theta)\right]\\
    & = \theta \text{Tr}\left(P^\top AP \tilde B\right)  - \Psi(\theta),
\end{align*}
where $\Psi(\theta):=\log Z(\theta) =\log\left(\sum_{u\in\mathcal{U}} \exp(\theta u)\right)$ is the corresponding log-partition function with $\mathcal{U}=\{y^\top Ay:y\in\{0,1\}^n\}$. Observe that $\Psi(\theta)$ does not depend on $P$.


Define $\hat{\theta}_P$ as the profile MLE of $\theta$ given  $P\in\Pi_n$, i.e.,
$$\hat{\theta}_P = \argmax_{\theta\in\mathbb{R}}\hat{\ell}(\theta,P).$$
By taking a derivative of the log-likelihood and equating to zero, the profile MLE satisfies
\begin{align}
\label{eq:GM-MLE-deriv}
    \text{Tr}\left(P^\top AP \tilde B\right)  & = \left.\frac{d\Psi(\theta)}{d\theta}\right|_{\theta = \hat{\theta}_P}\\
    & = \frac{\sum_{u\in\mathcal{U}} u\exp(u\hat{\theta}_P )}{Z(\hat{\theta}_P)}.\notag
\end{align}
The last equation is an increasing function of $\hat{\theta}_P$. To observe that, we show that the second derivative of $\Psi(\theta)$ is positive. This is given by
\begin{align*}
    \frac{d^2\Psi(\theta)}{d\theta^2} & = \frac{Z(\theta)\left[\sum_{u\in\mathcal{U}}u^2\exp(u\theta)\right] - \left(\sum_{u\in\mathcal{U}}u\exp(u\theta)\right)\left(\sum_{v\in\mathcal{U}}v\exp(v\theta)\right)} {(Z(\theta))^2}\\
    & = \frac{\sum_{u\in\mathcal{U}}\sum_{v\in\mathcal{U}}\left(u^2\exp(u\theta)\exp(v\theta) - uv\exp(u\theta)\exp(v\theta)\right)} {(Z(\theta))^2}\\
    & = \frac{\sum_{u,v}\frac{1}{2}\left(u-v\right)^2[\exp(u\theta)\exp(v\theta)]} {(Z(\theta))^2}\geq 0,
\end{align*}
and it is only equal to zero when $u=v$ for all $u,v\in\mathcal{U}$, which implies that $A=0$. Hence, the second derivative of $\Psi(\theta)$ is strictly positive.

Using  Equation \eqref{eq:GM-MLE-deriv}, the MLE for $P$ is
\begin{align}
    \hat{P}_{\text{MLE}} & = \argmax_{P\in\Pi_n}\hat{\ell}(\hat{\theta}_P, P) \notag\\
    & = \argmax_{P\in\Pi_n}\left\{ \hat{\theta}_P \left.\frac{d\Psi(\theta)}{d\theta}\right|_{\theta = \hat{\theta}_P} - \Psi(\hat{\theta}_P) \right\}. \label{eq:PMLE}
\end{align}
Observe that the previous objective function only depends on $\hat{\theta}_P$. 
Moreover, the derivative of this function with respect to $\theta$ and evaluated at $\hat{\theta}_P$ is equal to
$$
\hat{\theta}_P\left.\frac{d^2\Psi(\theta)}{d\theta^2}\right|_{\theta = \hat{\theta}_P}.
$$
When $\hat{\theta}_{\text{MLE}}=\hat{\theta}_{\hat P_{\text{MLE}}}$ is positive, this derivative is greater than $0$, which means that the objective function $\hat{\ell}(\hat{\theta}_P, P)$ in Equation~\eqref{eq:PMLE} is increasing as a function of $\hat{\Theta}_P$, 
and $\hat P_{\text{MLE}}$ is chosen via 
\begin{equation}
    \hat{P}_{\text{MLE}} = \argmax_{P\in\Pi_n} \text{Tr}\left(P^\top AP\tilde{B}\right) = \argmin_{P\in\Pi_n}\|A-P\tilde{B}P^\top \|_F^2.\label{eq:mle-profile}
\end{equation}
To see this, consider the existence of a $P$ such that 
$$\text{Tr}\left(P^\top AP\tilde{B}\right)>\text{Tr}\left(\hat P_{\text{MLE}}^\top A\hat P_{\text{MLE}}\tilde{B}\right).$$
Then from Equation~\eqref{eq:GM-MLE-deriv}, $0<\hat{\theta}_{\text{MLE}}<\hat{\theta}_{P}$ and because the objective function in \eqref{eq:PMLE} is increasing as a function of $\hat{\theta}_{P}$, then we have that
$$\hat{\ell}(\hat{\theta}_P, P)> \hat{\ell}(\hat{\theta}_{\text{MLE}}, \hat P_{\text{MLE}})$$
yielding a contradiction.
Analogously, when $\hat{\theta}_P$ is negative, the derivative of Equation~\eqref{eq:PMLE} is negative, and hence the MLE for $P$ is obtained via
 \begin{equation}
    \hat{P}_{\text{MLE}} = \argmin_{P\in\Pi_n} \text{Tr}\left(P^\top AP\tilde{B}\right) = \argmax_{P\in\Pi_n}\|A-P\tilde{B}P^\top \|_F^2.
\end{equation}
Finally, if $\hat{\theta}_{\text{MLE}}=0$, then $\hat\theta_P=0$ for all $P$ (by considering the contradiction present for $\hat\theta_P\neq 0$ from Equation~\eqref{eq:GM-MLE-deriv}).
Therefore, any $P$ achieves the optimum value in Equation~\eqref{eq:mle-profile}.
 \end{proof}

 \bibliographystyle{apa}
\bibliography{biblio2}